\def\BibTeX{{\rm B\kern-.05em{\sc i\kern-.025em b}\kern-.08em
		T\kern-.1667em\lower.7ex\hbox{E}\kern-.125emX}}
\newcommand{\citep}{\cite} 
\newif\ifincludepdf
\begin{document}
	\title{\Large Discovering Reliable Causal Rules}
	\author{Kailash Budhathoki\footnote{\rule{0pt}{1.2em}Amazon Research T{\"u}bingen (work done prior to joining Amazon)}
		\and  Mario Boley\thanks{Monash University} \and Jilles Vreeken\thanks{CISPA Helmholtz Center for Information Security}}
	\date{}
	
	\maketitle
	\begin{abstract}%
		{\small\baselineskip=9pt We study the problem of deriving policies, or \emph{rules}, that when enacted on a complex system, \emph{cause} a desired outcome. Absent the ability to perform controlled experiments, such rules have to be inferred from past observations of the system's behaviour. This is a challenging problem for two reasons: First, observational effects are often unrepresentative of the underlying causal effect because they are skewed by the presence of confounding factors. Second, naive empirical estimations of a rule's effect have a high variance, and, hence, their maximisation can lead to random results.\break
		\indent To address these issues, first we measure the causal effect of a rule from observational data---adjusting for the effect of potential confounders. Importantly, we provide a graphical criteria under which causal rule discovery is possible. Moreover, to discover reliable causal rules from a sample, we propose a conservative and consistent estimator of the causal effect, and derive an efficient and exact algorithm that maximises the estimator. On synthetic data, the proposed estimator converges faster to the ground truth than the naive estimator and recovers relevant causal rules even at small sample sizes. Extensive experiments on a variety of real-world datasets show that the proposed algorithm is efficient and discovers meaningful rules.%
		}%
	\end{abstract}

	\section{Introduction}\label{sec:intro}
	The ultimate goal of meaningful data analysis is to understand how the data was generated, by reasoning in terms of cause and effect. 
	Towards this goal, rule mining~\cite{agrawal:93:associationrules,wrobel:97:sgd,friedman:99:bumphunting,furnkranz:12:rule-book} has been studied extensively over the years. Most rule miners measure the effect of a rule in terms of correlation or dependence. Correlation, however, does not imply causation. As a result, rules that maximise such effect measures are in no way guaranteed to reflect the underlying data-generating process.
	
	The gold standard for establishing the causal relationship between variables is through a controlled experiment, such as a randomized controlled trial (RCT)~\cite{hernan:18:book}. In many cases, however, it is impossible or at the very least impractical to perform an RCT. We hence most often have to infer causal dependencies from observational data, which is data that was collected without full control. In this work, we study discovering causal rules from observational data that maximise causal effect. 
	Though simple to state, this is a very hard task. Not only do we have to cope with an intricate combination of two semantic problems---one statistical and one structural---but in addition the task is also computationally difficult.
	
	The structural problem is often referred to as Simpson's paradox. Even strong and confidently measured effects of a rule might not actually reflect true domain mechanisms, but can be mere artefacts of the effect of other variables. Notably, such confounding effects can not only attenuate or amplify the marginal effect of a rule on the target variable, in the most misleading cases they can even result in sign reversal, i.e. when interpreted naively, the data might indicate a negative effect even though in reality there is a positive effect~\cite[Chap. 6]{pearl:09:book}. For example, a drug might appear to be effective for the treatment of a disease for the overall population. However, if the treatment assignment was affected by sex that also affects the recovery (say males, who recover---regardless of the drug---more often than females, are also more likely to use the drug than females), we may find that the treatment is \emph{not} effective at all to male and female subpopulations.
	
	The statistical problem is the well-known phenomenon of overfitting. This phenomenon results from the high variance of the naive empirical (or ``plug-in'') estimator of causal effect for rules with too small sample sizes for the instances either covered, or excluded by the rule.
	%either of the two events, $\rulecondition=\true$ or $\rulecondition=\false$. 
	Combined with the maximization task over a usually very large rule language, this variance turns into a strong positive bias that dominates the search and causes essentially random results of either extremely specific or extremely general rules.
	
	Third, the rule space over which we maximise causal effect is exponential in size and does not exhibit structure that is trivially exploited. We therefore need an efficient optimization algorithm. 
	In this paper, we present a theoretically sound approach to discovering causal rules that remedies each of these problems. 
	%each of the aforementioned problems. 
	\begin{enumerate}
		\item To address the structural problem, we propose to measure the causal effect of a rule from observational data. To this end, we control for the effect of a given set of potential confounder variables. In particular, we give a graphical criteria under which it is possible to discover causal rules. While in practice the set of control variables will rarely be complete, i.e., not contain all potential confounders, this approach can rule out specific alternative explanations of findings as well as eliminate misleading observations caused by selected observables that are known to be strong confounders. In fact, this pragmatic approach is usually a necessity caused by sparsity. %the limited sample size.
		\item To address the overfitting problem, we propose to measure and optimise the \emph{reliable} effect of a rule. In contrast to the plug-in estimator, we propose a conservative empirical estimate of the population effect, that is not prone to overfitting. Additionally, and in contrast to other known rule optimisation criteria, it is also \emph{consistent}, i.e., with increasing amounts of evidence (data), the measure converges to the actual population effect of a rule.
		\item We develop a practical algorithm for efficiently discovering the top-$k$ strongest reliable causal rules. In particular, we show how the optimisation function can be cast into a branch-and-bound approach based on a computationally efficient and tight optimistic estimator. 
	\end{enumerate}
	
	We support our claims by experiments on both synthetic and real-world datasets as well as by reporting the required computation times on a large set of benchmark datasets.
	
	\section{Related Work}\label{sec:rel}
	\textbf{Association rules.}
	In rule-based classification, the goal is to find a set of rules that optimally predict the target label. Classic approaches include CN2~\cite{lavrac:04:cn2}, and FOIL~\cite{quinlan:95:foil}. In more recent work, the attention shifted from accuracy to optimising more reliable scores, such as area under the curve (AUC)~\cite{furnkranz:05:roc}. 
	
	In association rule mining~\cite{agrawal:93:associationrules}, we can impose hard constraints on the relative occurrence frequency to get reliable rules. In emerging and contrast pattern mining~\cite{dong:99:emerging-patterns,bay:01:contrast}, we can get reliable patterns whose \emph{supports} differ significantly between datasets by performing a statistical hypothesis test.
	Most subgroup discovery~\cite{wrobel:97:sgd} methods optimise a surrogate function based on some null hypothesis test. The resulting objective functions are usually a multiplicative combination of coverage and effect.
	
	All these methods optimise associational effect measures that are based on the observed joint distribution. Thus they capture correlation or dependence between variables. They do not reflect the effect if we were to intervene in the system. \break
	
	\noindent \textbf{Causal rules.} Although much of literature is devoted in mining reliable association rules, a few proposals have been made towards mining causal rules.
	Silverstein et al.~\cite{silverstein:00:causal-assoc-rule} test for pairwise dependence and conditional independence relationships to discover causal associations rules that consist of a univariate antecedent given a univariate control variable. 
	Li et al.~\cite{li:15:causalrule} discover causal rules from observational data given a target by first mining association rules with the target as a consequent, and performing cohort studies per rule. 
	
	Atzmueller \& Puppe~\cite{atzmueller:09:causalsubgroup} propose a semi-automatic approach to discovering causal interactions by mining subgroups using a chosen quality function, inferring a causal network over these, and visually presenting this to the user. 
	Causal falling rule lists~\cite{wang:15:fallingrule} are sequences of ``if-then'' rules over the covariates such that the effect of a specific intervention decreases monotonically down the list from \emph{experimental} data. 
	Shamsinejadbabaki et al.~\cite{blockeel:13:causal-action-mining} discover actions from a partial directed acyclic graph for which the post-intervention probability of \target differs from the observational probability.

	While all these methods have opened the research direction, we still lack a theoretical understanding. Roughly speaking, all these methods propose to condition ``some'' effect measure on ``some'' covariates. In this work, we present a theoretical result showing which covariates to condition upon, under what conditions causal rule discovery is possible, and how an effect measure must be constructed to capture causal effect.
	Overall, despite the importance of the problem, to the best of our knowledge there does not exist a theoretically well-founded, efficient approach to discovering reliable causal rules from observational data.
	
	\section{Reliable Causal Rules}\label{sec:rules}
	We consider a system of discrete random variables with a designated \textbf{target variable} \target and a number of covariates, which we differentiate into \textbf{actionable variables}\footnote{Although an actionable variable (e.g. blood group) may not be directly physically manipulable, a causal model such as a structural equation model~\citep{pearl:09:book} permits us to compute the effect of intervention on such variables.} $\actionables \coloneqq (\attribute_1, \dotsc, \attribute_\ell)$ and \textbf{control variables} $\controls \coloneqq (\control_1, \dotsc, \control_m)$.
	For example, \target might indicate recovery from a disease, \actionables different medications that can be administered to a patient, and \controls might be attributes of patients, such as blood group. Let $\attributespace_j$ denote the domain of $\attribute_j$, and $\controlspace_j$ be that of $\control_j$. As such, the domain of \actionables is the Cartesian product $\actionablesSpace = \attributespace_1 \times \dotsm \times \attributespace_\ell$, and that of \controls is $\controlsspace = \controlspace_1 \times \dotsm \times \controlspace_m$.
	
	We use Pearl’s do-notation~\cite[Chap. 3]{pearl:09:book} $\doo(X \coloneqq x)$, or $\doo(x)$ in short, to represent the \textbf{atomic intervention} on variable $X$ which changes the system by assigning $X$ to a value $x$, keeping everything else in the system fixed. The distribution of \target after the intervention $\doo(x)$ is represented by the \textbf{post-intervention} distribution $\Pr(\target \mid \doo(X \coloneqq x))$. This may not be the same as the \textbf{observed} conditional distribution $\Pr(\target \mid X=x)$. As we observe $\Pr(\target \mid X=x)$ without controlling the system, other variables might have influenced \target, unlike in case of $\Pr(\target \mid \doo(X \coloneqq x))$. Therefore, to capture the underlying data-generating mechanism, we have to use the post-intervention distribution $\Pr(\target \mid \doo(X \coloneqq x))$.
	
	Let $\mathcal{S}$ be the set of all possible vector values of all possible subsets of actionable variables. More formally, we have the following definition:
	\[
	\mathcal{S} = \bigcup\limits_{\actionablesVal \in \mathcal{P}(\{ \actionableSpace_1, \dotsc, \actionableSpace_\ell \})} \actionablesVal \; ,
	\]
	where $\mathcal{P}(\bullet)$ is the powerset function.
	In this work, we are concerned with \textbf{rules} $\drule: \mathcal{S}  \rightarrow \{\true, \false\}$ that for a given value $\actionablesVal \in \mathcal{S}$ evaluate to either true (\true) or false (\false).
	Specifically, we investigate the \textbf{rule language} \drulespace of conjunctions of \textbf{propositions} $\drule \equiv \pi_1 \wedge \dots \wedge \pi_l$ that can be formed from inequality and equality conditions on actionable variables $\attribute_j$s (e.g. $\pi \equiv \text{dosage} \geq 450$). 
	
	Let $\actionablesSubset \subseteq \actionables$ denote the subset of actionable variables, with their joint domain \actionablesSubsetSpace, on which propositions of a rule \drule are defined. Most rule miners measure the effect of a rule using the observed conditional distribution, 
	\[
	\Pr(Y \mid \sigma=\true) = \sum\limits_{\drule(\actionablesVal)=\true} \Pr(\target \mid \attributes=\actionablesVal)\;,
	\]
	which captures the correlation or more generally dependence between the rule and the target. To understand the underlying data-generating mechanism, however, we need post-intervention distributions.
	
	One caveat with rules is that, in general, there are many values \actionablesVal that can satisfy a rule \drule (e.g., $\drule \equiv \attribute_j \leq 3$ is satisfied by $\attribute_j = 3, 2, \dots$). As a result, we have a multitude of atomic interventions to consider (e.g. for $\drule \equiv \attribute_j \leq 3$, we have $\Pr(\target \mid \doo(\attribute_j \coloneqq 3)), \Pr(\target \mid \doo(\attribute_j \coloneqq 2)), \dots$).
	Depending on the atomic intervention we choose, we may get different answers. This ambiguity can be avoided by considering the average of all post-intervention distributions where the probability of each atomic intervention is defined by some \textbf{stochastic policy} $Q_\drule$~\cite[Chap. 4]{pearl:09:book}. In reinforcement learning,
	%~\cite{kaelbling:96:reinforcement-learning}
	for instance, a stochastic policy is the conditional probability of an action given some state. Formally, the post-intervention distribution of \target under the stochastic policy $Q_\drule$ is given by
	\begin{align}
	\mathclap{
		\Pr(\target \mid \doo(Q_\drule)) = \sum\limits_{\drule(\actionablesVal)=\true} \Pr(\target \mid \doo(\attributes \coloneqq \actionablesVal)) Q_\drule(\doo(\attributes \coloneqq \actionablesVal)) \; .
	}
	\end{align}
	
	Let $\bar{\drule}$ denote the logical negation of \drule. Our goal is to identify rules \drule that have a high \textbf{causal effect} on a specific \textbf{outcome} \targetval for the target variable \target , which we define as the difference in the post-intervention probabilities of \targetval under the stochastic policies corresponding to $\drule$ and $\bar{\drule}$, i.e.,
	\begin{align}
	\ec = \pmf(\targetval \mid \doo(Q_\drule)) - \pmf(\targetval \mid \doo(Q_{\bar{\drule}})) \; ,
	\end{align}
	where \pmf represents the probability mass function.
	Next we show how to compute the above from observational data, and state the stochastic policy to this end.
	
	\subsection{Causal Effect from Observational Data}\label{subsec:structure}
	In observational data, we have observed conditional distributions $\Pr(\target \mid \attributes=\actionablesVal)$ which may not be the same as post-intervention distributions $\Pr(\target \mid \doo(\attributes \coloneqq \actionablesVal))$. 
	A well-known reason for this discrepancy is the potential presence of \textbf{confounders}, i.e., variables that influence both, our desired intervention variable(s) and the target.
	More generally, to measure the causal effect, we have to eliminate the influence of all \textit{spurious path} in the \textbf{causal graph}, i.e., the directed graph that describes the conditional independences of our random variables (with respect to all post-intervention distributions).
	
	In more detail, when estimating the causal effect of \attribute on \target, any undirected path connecting \target and \attribute that has an incoming edge towards \attribute is a \textbf{spurious path}.
	A node (variable) is a \textbf{collider} on a path if its in-degree is 2, e.g., \control is a collider on the path $\attribute \rightarrow \control \leftarrow \target$.  A spurious path is \textbf{blocked} by a set of nodes \somecontrols, if the path contains a collider that is not in \somecontrols, or a non-collider on the path is in \somecontrols~\cite[Def. 1.2.3]{pearl:09:book}. A set of nodes \somecontrols satisfies the \textbf{back-door criterion} for a set of nodes \attributes and a node \target if it blocks all spurious paths from any \attribute in \attributes to \target, and there is no direct path from any \attribute in \attributes to any \control in \somecontrols~\cite[Def. 3.3.1]{pearl:09:book}. For \attributes and \target, if a set \somecontrols satisfies the back-door criterion, then observational and post-intervention probabilities are equal within each \controlsval stratum of \somecontrols:
	%~\cite[Thm. 3.3.2]{pearl:09:book}:
	\begin{align}
	\pmf(\targetval \mid \doo(\attributes \coloneqq \actionablesVal), \controlsval) = \pmf(\targetval \mid \actionablesVal, \controlsval)  \label{eq:backdoor} \; ,
	\end{align}
	and averaging the observational probabilities over \somecontrols gives $\pmf(\targetval   \mid \doo(\attributes \coloneqq \actionablesVal))$~\cite[Thm. 3.3.2]{pearl:09:book}.
	
	%Intuitively, we can think of a set \somecontrols that satisfies the backdoor criterion as a set that contains all possible confounders. 
	Therefore, to compute the post-intervention probability of \targetval under the stochastic policy $\policy_\drule$ for a rule \drule, i.e. $\pmf(\targetval \mid \doo(Q_\drule))$, we need a set of variables \somecontrols that satisfy the back-door criterion for actionable variables $\attributes \subseteq \descriptionVars$ and \target. As we consider the rule language \drulespace over all actionable variables \descriptionVars, we require a set of control variables \controls that satisfy the back-door criterion for \textit{all} the actionable variables \actionables.
	%Consequently, for \e to be a truthful estimate of \ec for all rules $\drule \in \drulespace$, we have to assure that our control variables \controls satisfy the back-door criterion for \textit{all} the actionable variables \actionables. 
	This also implies that there are no other spurious paths via potentially unobserved variables \latents.
	In the special case when \controls is empty, \target must not cause any actionable variable $\attribute_j \in \actionables$. We formalise these conditions in the definition below.
	
	%\note{The situation is further complicated by the potential existence of latent variables (\latents) in the system.}
	%Even if there are no confounders, we have to be aware of the structural aspects of the variables for the effect measure to be causal. In the following definition, we give a criterion for an admissible input to causal rule discovery.
	%The following definition summarises all these requirements \note{Fig 1 should be introduced a bit earlier, perhaps right after "spurious path" in the paragraph above? (see Fig.~\ref{fig:admissible-input} for an illustration)}.
	\begin{definition}[Admissible Input to Causal Rule Discovery]
		%	Let \actionables be a set of description variables, \target be a target variable, \controls be a set of control variables, and \latents be a set of latent variables.
		The causal system $(\actionables, \target, \controls)$ of actionable variables, target variable, and control variables is an admissible input to causal rule discovery if the underlying causal graph of the variables satisfy the following:
		\begin{enumerate}[leftmargin=*, label=(\alph*)]
			\setlength\itemsep{0em}
			\item there are no outgoing edges from \target to any \attribute in \actionables,
			\item no outgoing edges from any \attribute in \actionables to any \control in \controls,
			\item no edges between actionable variables \actionables, and
			\item no edges between any unobserved \latent and \attribute in \actionables.
		\end{enumerate}
		\label{def:admissible-input}
	\end{definition}
	
	\tikzset{
		event/.style={draw, circle, fill=black, inner sep=0pt, outer sep=0pt, minimum size=1mm},
		dotsevent/.style={draw=white, circle, fill=white, inner sep=0pt, outer sep=0pt, minimum size=3mm},
		phantom/.style={event, fill=white, draw=white, font=\tiny},
		link/.style={thin, black, -{Latex[length=2mm, width=0.8mm]}, shorten >=0mm},
		dlink/.style={link, densely dashed},
	}
	\begin{figure}[tb]
		\centering
		\ifincludepdf
			\includegraphics{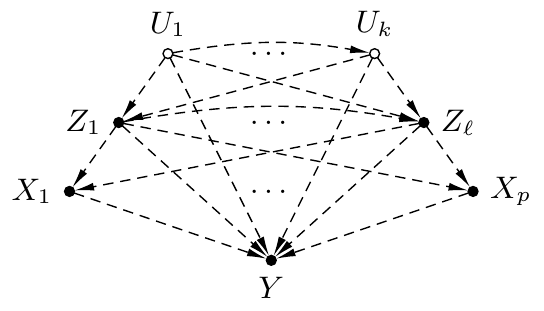}
		\else
			\begin{tikzpicture}[node distance=7mm, font=\small]
			\node [event, label=left:$\control_1$] (z1) {};
			\node [event, right=3cm of z1, label=right:$\control_\ell$] (zl) {};
			\node [dotsevent] (zd) at ($(z1)!0.5!(zl)$) {$\dots$};
			\node [phantom, right=4mm of z1] (zp) {};
			
			\node [phantom, above of=z1] (up1) {};
			\node [phantom, above of=zl] (up2) {};
			\node [event, fill=white, right=4mm of up1, label=above:$\latent_1$] (u1) {};
			\node [event, fill=white, left=4mm of up2, label=above:$\latent_k$] (uk) {};
			\node [dotsevent] (ud) at ($(u1)!0.5!(uk)$) {$\dots$};
			
			\node [phantom, below of=z1] (xp1) {};
			\node [phantom, below of=zl] (xp2) {};
			\node [event, left=4mm of xp1, label=left:$\attribute_1$] (x1) {};
			\node [event, right=4mm of xp2, label=right:$\attribute_p$] (xm) {};
			\node [dotsevent] (xd) at ($(x1)!0.5!(xm)$) {$\dots$};
			
			\node [event, below of=xd, label=below:$\target$] (y) {};
			
			\draw [dlink] (x1) -- (y);
			\draw [dlink] (xm) -- (y);
			
			\draw [dlink] (z1) -- (x1);
			\draw [dlink] (z1) -- (xm);
			\draw [dlink] (zl) -- (x1);
			\draw [dlink] (zl) -- (xm);
			\draw [dlink] (z1) -- (y);
			\draw [dlink] (zl) -- (y);
			
			\draw [dlink] (u1) -- (z1);
			\draw [dlink] (u1) -- (zl);
			\draw [dlink] (uk) -- (z1);
			\draw [dlink] (uk) -- (zl);
			
			\draw [dlink] (u1) -- (y);
			\draw [dlink] (uk) -- (y);
			
			\draw [dlink] (z1) to[out=10, in=170] (zl);
			\draw [dlink] (u1) to[out=10, in=170] (uk);
			%		\draw [dlink] (x1) to[out=-10, in=-170] (xl);
			
			\end{tikzpicture}
		\fi	
		\caption{A skeleton causal graph of an admissible input to causal rule discovery (see Def.~\ref{def:admissible-input}). A dashed edge from a node $u$ to $v$ indicates that $u$ potentially affects $v$. 
			%			In the diagram, there are no outgoing edges from \target to any \attribute in \actionables, no outgoing edges from any \attribute in \actionables to any \control in \controls, no edges between description variables \actionables, and no edges between any \latent in \latents and \attribute in \actionables.
		}
		\label{fig:admissible-input}
		%	\end{minipage}
	\end{figure}
	
	%Condition (a) rules out direct spurious paths from \target to any \attribute in \actionables, which cannot be blocked by any set of variables. Condition (b) ensures that intervention on any \attribute in \actionables does not affect any \control in \controls. Condition (c) implies that there are no spurious paths between any \attribute in \actionables and \target via other description variables. Condition (d) ensures that, by conditioning on \controls, we block all the spurious path between any \attribute in \actionables and \target via \latents.
	In Fig.~\ref{fig:admissible-input}, we show a skeleton causal graph of an admissible input to causal discovery. The proposition below shows that the control variables \controls block all spurious paths between any subset of actionable variables $\attributes \in \descriptionVars$ and \target if the input is admissible.
	\begin{restatable}{proposition}{propbackdoor}\label{prop:backdoor}
		Let (\actionables, \target, \controls) be an admissible input to causal rule discovery. Then the control variables \controls block all spurious paths between any subset of actionable variables $\attributes \subseteq \actionables$ and \target.
	\end{restatable}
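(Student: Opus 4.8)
The plan is to reduce the statement to a single actionable variable and then argue directly about the local structure of a spurious path near that variable, using only the given notion of back-door blocking. First I would note that a spurious path between the set $\attributes$ and $\target$ is, by definition, a spurious path between some single $\attribute \in \attributes$ and $\target$, and that the blocking condition is checked path by path; hence it suffices to fix an arbitrary $\attribute \in \actionables$ and an arbitrary spurious path $p$ between $\attribute$ and $\target$, and show that $\controls$ blocks $p$.

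Next I would examine the endpoint of $p$ at $\attribute$. Since $p$ is spurious, its edge incident to $\attribute$ is oriented into $\attribute$, so $p$ has the form $\attribute \leftarrow W \cdots \target$ for some node $W$ with $W \to \attribute$ an edge of the causal graph. If $p$ were the single edge $\attribute \leftarrow \target$, this would be an outgoing edge from $\target$ to an actionable variable, contradicting admissibility condition (a); hence $p$ has length at least two and $W \neq \target$. Now I would rule out the remaining possibilities for $W$ other than a control variable: $W$ cannot be another actionable variable, by condition (c) (no edges between actionable variables); $W$ cannot be an unobserved variable $\latent$, by condition (d) (no edges between unobserved variables and actionable variables); and $W \neq \target$, by condition (a). Since the variables of the system are exactly the actionable variables, the target, the control variables, and the unobserved variables, we conclude $W \in \controls$.

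Finally I would observe that $W$ is a non-collider on $p$: the edge of $p$ between $W$ and $\attribute$ is $W \to \attribute$, which points away from $W$, so $W$ has an outgoing edge along $p$ and therefore cannot be a collider on $p$. Thus $p$ contains a non-collider, namely $W$, that lies in $\controls$, which is precisely the condition for $\controls$ to block $p$. Since $p$ was an arbitrary spurious path between an arbitrary $\attribute \in \attributes$ and $\target$, the claim follows. I expect the only real care needed to be in the bookkeeping — the exhaustive enumeration of node types adjacent to $\attribute$ via an incoming edge, and the handling of the length-one path and of the reduction from a subset $\attributes$ to a single actionable variable; the core argument is short once the definitions are unwound, and requires no machinery beyond the stated back-door blocking criterion.
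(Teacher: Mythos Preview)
Your proof is correct and, in spirit, close to the paper's: both argue by inspecting how a spurious path can leave an actionable variable and use conditions (a), (c), (d) to rule out all neighbours except a control variable. The paper organises this as a case split over the ``route'' a spurious path could take (directly from $\target$, via other actionables, via latents, via controls), whereas you localise the argument to the single node $W$ adjacent to $\attribute$ along the incoming edge and observe that $W \in \controls$ is forced and that $W$ is a non-collider on $p$; this is tighter and plugs directly into the formal blocking definition. One notable by-product of your route is that condition~(b) never enters: you establish the blocking claim from (a), (c), (d) alone, whereas the paper invokes (b) informally (``any spurious path unblocked by one control variable is blocked by another''). That is not a gap in your argument---condition~(b) is needed for the second clause of the back-door criterion (no $\attribute \to \control$ edges), not for the blocking of spurious paths that the proposition actually asserts.
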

	\proofApx
	
	%\begin{lemma}\label{lemma:backdoor}
	%	Let (\actionables, \target, \controls) be an admissible input. Then the control variables \controls block all spurious paths between any subset of actionable variables \attributes and \target.
	%\end{lemma}
	Using admissible control variables \controls, we can then compute $\pmf(\targetval \mid \doo(\policy_\drule))$ for any rule \drule from the rule language \drulespace as
	\begin{align}
	\pmf(\targetval \mid \doo(\policy_\drule)) 
	%	&= \sum\limits_{\substack{\actionablesVal \in \actionablesSubsetSpace\\ \drule(\actionablesVal)=\true}} \expectation{\pmf(\targetval \mid \attributes=\attributesval, \controls)} Q_\drule(\doo(\actionablesVal)) \\
	&=\sum\limits_{\drule(\actionablesVal)=\true} \sum\limits_{\controlsvec \in \controlsspace} \pmf(\targetval \mid \attributesval, \controlsval) \pmf(\controlsval) \policy_\drule(\doo(\actionablesVal)) \\
	&=\sum\limits_{\controlsvec \in \controlsspace} \pmf(\controlsval) \sum\limits_{\drule(\actionablesVal)=\true} \pmf(\targetval \mid \attributesval, \controlsval) \policy_\drule(\doo(\actionablesVal)) \; ,
	\end{align}
	where the first expression is obtained by applying the back-door adjustment formula~\cite[Thm. 3.3.2]{pearl:09:book} and the second expression is obtained from the first by exchanging the inner summation with the outer one.
	What is left now is to define the stochastic policy $\policy_\drule$ which in some sense we treated as an oracle so far. The following theorem shows that, with a specific choice of $\policy_\drule$,  we can compute the causal effect of any rule on the target, from observational data, in terms of simple conditional expectations (akin to conditional average treatment effect~\cite{imai:13:heterogenity}).
	
	%, we show We consider the following stochastic policy as it allows us to express $\pmf(\targetval \mid \doo(\policy_\drule))$ in terms of simple conditional expectations (akin to conditional average treatment effect~\cite{imai:13:heterogenity}):
	%\begin{align}
	%	\policy_\drule(\doo(\actionablesVal)) = \pmf(\attributes=\attributesval \mid \drule=\true, \controls=\controlsval) \; .
	%\end{align}
	
	\begin{restatable}{thm}{theoremcausaleffect}\label{theorem:causaleffect}
		Given an admissible input to causal rule discovery, $(\actionables, \target, \controls)$, and a stochastic policy $\policy_\drule(\doo(\actionablesVal)) = \pmf(\attributes=\attributesval \mid \drule=\true, \controls=\controlsval)$, the causal effect of any rule \drule, from the rule language \drulespace, on \target in observational data is given by
		\begin{align}
		\ec = \expectation{\pmf(\targetval \mid \drule, \controls)} - \expectation{\pmf(\targetval \mid \bar{\drule}, \controls)} \; .
		\label{eq:rule-causal-effect}
		\end{align}
	\end{restatable}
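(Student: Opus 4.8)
The plan is to chain together three ingredients already available in the excerpt: the back-door criterion guaranteed by Proposition~\ref{prop:backdoor}, Pearl's back-door adjustment formula (Thm.~3.3.2), and the definition of the post-intervention distribution under a stochastic policy, and then to show that the specific choice $\policy_{\drule}(\doo(\actionablesVal)) = \pmf(\attributes=\attributesval \mid \drule=\true, \controls=\controlsval)$ makes the inner summation over actionable values collapse to a single conditional probability. Concretely, since $(\actionables,\target,\controls)$ is admissible, Proposition~\ref{prop:backdoor} tells us that \controls blocks every spurious path between the actionable subvector \attributes on which \drule is defined and \target, so \controls satisfies the back-door criterion for $(\attributes,\target)$ and Eq.~\eqref{eq:backdoor} holds stratum-wise. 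Averaging over the policy and applying back-door adjustment inside the average (together with a finite exchange of summation order) gives exactly the two displays preceding the theorem, i.e.
\[
\pmf(\targetval \mid \doo(\policy_{\drule})) = \sum_{\controlsval \in \controlsspace} \pmf(\controlsval) \sum_{\drule(\actionablesVal)=\true} \pmf(\targetval \mid \attributesval, \controlsval)\, \policy_{\drule}(\doo(\actionablesVal)) \, .
\]

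Next I would substitute the stated policy into this expression and simplify the inner sum. The term $\pmf(\attributesval \mid \drule=\true, \controlsval)$ vanishes unless $\drule(\attributesval)=\true$, so the indexing constraint is automatic, and because \drule is a \emph{deterministic} function of \attributes we have $\pmf(\targetval \mid \attributesval, \controlsval) = \pmf(\targetval \mid \attributesval, \drule=\true, \controlsval)$ on the support of the policy. Hence the inner sum is $\sum_{\attributesval} \pmf(\targetval \mid \attributesval, \drule=\true, \controlsval)\, \pmf(\attributesval \mid \drule=\true, \controlsval)$, which by the law of total probability (marginalising out \attributes within the $\{\drule=\true,\controls=\controlsval\}$ stratum) equals $\pmf(\targetval \mid \drule=\true, \controlsval)$. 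Re-inserting this into the outer sum weighted by $\pmf(\controlsval)$ yields $\pmf(\targetval \mid \doo(\policy_{\drule})) = \expectation{\pmf(\targetval \mid \drule, \controls)}$. Running the identical argument with $\bar{\drule}$ (and the corresponding policy $\policy_{\bar{\drule}}$) in place of \drule gives $\pmf(\targetval \mid \doo(\policy_{\bar{\drule}})) = \expectation{\pmf(\targetval \mid \bar{\drule}, \controls)}$, and subtracting the two identities produces Eq.~\eqref{eq:rule-causal-effect} by the definition of \ec.

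The step I expect to be the main obstacle is the very first one: justifying the back-door-adjusted form of $\pmf(\targetval \mid \doo(\policy_{\drule}))$ for a \emph{stochastic}, indeed \controls-dependent, policy, since Pearl's Thm.~3.3.2 is stated for an atomic intervention $\doo(\attributes\coloneqq\attributesval)$. I would handle this by writing $\pmf(\targetval \mid \doo(\policy_{\drule}))$ as the policy-average of atomic post-intervention distributions (Eq.~(1) in the text), applying back-door adjustment to each atomic term, and then exchanging the two finite sums; the fact that the conditional plan is allowed to read off the realised value of \controls is admissible precisely because \controls is observed and lies in the back-door set, so no new spurious path is opened. A second, minor technical point to spell out is that \drule depends only on the subvector $\actionablesSubset \subseteq \actionables$ carrying its propositions, so all summations range over the joint domain $\attributesspace$ of \actionablesSubset and the ``deterministic function of \attributes'' argument used above applies verbatim. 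The remaining manipulations are routine.
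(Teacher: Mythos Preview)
Your proposal is correct and follows essentially the same route as the paper: start from the back-door-adjusted expression for $\pmf(\targetval \mid \doo(\policy_\drule))$ already displayed before the theorem, substitute the specified policy, collapse the inner sum over $\actionablesVal$ to $\pmf(\targetval \mid \drule, \controlsval)$, and take the expectation over $\controls$. The only cosmetic difference is in the collapse step: the paper rewrites $\pmf(\attributesval \mid \drule,\controlsval)$ as $\pmf(\attributesval \mid \controlsval)/\pmf(\drule \mid \controlsval)$, forms the joint $\pmf(\targetval,\attributesval \mid \controlsval)$, sums to $\pmf(\targetval,\drule \mid \controlsval)$, and then divides, whereas you insert $\drule=\true$ into the conditioning of $\pmf(\targetval \mid \attributesval, \controlsval)$ (licit since $\drule$ is a deterministic function of $\attributes$) and invoke the law of total probability directly---these are equivalent one-line manipulations.
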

	\proofApx
	
	That is, for admissible input $(\descriptionVars, \target, \controls)$, the expression above on the r.h.s. gives us the causal effect of any rule \drule from the rule language \drulespace on \target from observational data.
	Importantly, we have shown that causal rule discovery is a difficult problem in practice---any violation of Def.~\ref{def:admissible-input} would render Eq.~\eqref{eq:rule-causal-effect} non-causal. Having said that, criterion (a) is an implicit assumption in rule discovery, and criterion (b) and (d) are a form of \emph{causal sufficiency}~\cite{scheines:97:intro}, which is a fairly standard assumption in causal inference literature. 
	
	Exceptional cases aside, in practice, we often do not know the complete causal graph. While with some assumptions, we can discover a partially directed graph from observational data~\cite{spirtes:00:book}, a rather pragmatic approach is to leverage domain knowledge to eliminate certain variables following the guidelines in Def.~\ref{def:admissible-input}. For instance, \emph{smoking} causes \emph{tar deposits} in a person’s lungs, therefore both \emph{smoking} and \emph{tar deposits} cannot be in \descriptionVars; this ensures that criterion (c) of Def~\ref{def:admissible-input} is not violated. Moreover, \emph{smoking} may affect a person's \emph{blood pressure}. Thus it is unsafe to include \emph{blood pressure} in \controls---criterion (b) would be violated otherwise. This way, we can get a practical solution that is closer to the truth.

	\subsection{Statistical Considerations}\label{subsec:statistical}
	In practice, we want to estimate \e (Eq.~\eqref{eq:rule-causal-effect}) from a sample drawn from the population. 
	%Let $\sample=\{\unit^{(i)}\}_{i=1}^\samplesize$ be a sample, 
	Suppose that we have a sample of \samplesize instances \textbf{stratified} by \controls from the population (or in practice, the sample size is large enough to give relatively accurate estimates of the marginal distribution of \controls). The naive estimator of the causal effect \e is the estimator based on the empirical distribution \hatPr (resp. \empPmf for pmf), i.e. the \textbf{plug-in} estimator:
	\begin{align*}
	\ee
	&= \expectation{\empPmf(\targetval \mid \drule, \controlsval) - \empPmf(\targetval  \mid \bar{\drule}, \controlsval)}\\ 
	&= \sum \limits_{\controlsval \in \controlsspace} \Big ( \empPmf(\targetval \mid \drule, \controlsval) - \empPmf(\targetval \mid \bar{\drule}, \controlsval) \Big ) \empPmf(\controlsval) \\
	&= \sum \limits_{\controlsval \in \controlsspace} ( \hatp - \hatq ) \empPmf(\controlsval) \;,
	\end{align*}
	where $\hatp=\hatPr(\targetval \mid \drule, \controlsval)$, and $\hatq=\hatPr(\targetval \mid \bar{\drule}, \controlsval)$. In a stratified sample, $\empPmf(\controlsval)$ is the same as $\pmf(\controlsval)$.
	%for all \controlsval. 
	As the empirical distribution is a consistent estimator of the population distribution, \ee is a consistent estimator of \e.
	
	The plug-in estimator, however, shows high variance for rules with overly small sample sizes for either of the two events, \drule or $\bar{\drule}$. To illustrate this, in Fig.~\ref{fig:statistical-problem} (left), we show the estimated distribution for the plug-in estimator for a very specific rule of five conditions, and see that while it is close to the true causal effect, it shows very high variance in small samples. This high variance is problematic, as it leads to overfitting: if we use this estimator for the optimisation task over a very large space of rules, the variance will turn into a strong positive bias---we will overestimate the effects of rules from the sample---that dominates the search, and we end up with random results of either extremely specific or extremely general rules.
	
	We address this problem of high variance by biasing the plug-in estimator. In particular, we introduce bias in terms of our confidence in the point estimates using confidence intervals. Note that we need not quantify the confidence of the point estimate $\empPmf(\controlsval)$ as $\empPmf(\controlsval) = \pmf(\controlsval)$; the point estimates of concern are the conditional probabilities \hatp and \hatq. 
	
	In repeated random samples of instances with $\drule\!=\!\true$ and $\controls\!=\!\controlsval$ from the population, the number of instances with \textit{successful} outcome \targetval is a binomial random variable with the success probability $\pmf(\targetval\! \mid\! \drule, \controlsval)$. In a stratum \controlsval of \controls, let \nhatp and \nhatq be the number of instances that satisfy \drule and $\bar{\drule}$, respectively. Then the one-sided binomial confidence interval of \hatp, using a normal approximation of the error distribution, is given by $\zscore \sqrt{\hatp(1-\hatp)/\nhatp}$, where \zscore is the $1-\alpha/2$ quantile of a standard normal distribution for an error rate \erate, or simply the \textbf{z-score} corresponding to the confidence level. For a $95\%$ confidence level, for instance, the error rate is $\alpha\!=\!0.05$, thereby $\zscore\!=\!1.96$. We can easily verify that the maximum value of $\hatp(1-\hatp)$ is $1/4$, and hence the maximum value of the one-sided confidence interval is $\zscore/(2 \sqrt{\nhatp})$.
	%Let $n_s$ be the number of successes in $n$ trials. The one-sided confidence interval of the empirical probability of success, $q\!=\!n_s/n$, of a binomially distributed observation, using a normal approximation of the error distribution, is hence $\zscore \sqrt{q(1-q)/n}$, where \zscore is the $1-\frac{\alpha}{2}$ quantile of a standard normal distribution for an error rate \erate. For a $95\%$ confidence level, the error rate is $\alpha\!=\!0.05$, thereby $\zscore\!=\!1.96$. We can easily verify that the maximum value of $q(1-q)$ is $1/4$, and hence the maximum value of the one-sided confidence interval is $\zscore/(2 \sqrt{n})$.}
	Taking a conservative approach, we bias the difference $\hatp - \hatq$ by subtracting the sum of the maximum values of the one-sided confidence intervals of the point estimates, this results in 
	%In a stratum \controlsval, let \nhatp be the number of data points that satisfy $\drule\!=\!\true$, and \nhatq be the number of data points that satisfy $\drule\!=\!\false$. 
	%Then the confidence interval corrected difference $\hatp - \hatq$ is given by
	\begin{align*}
	\diff = (\hatp - \hatq ) - \Big ( \zscore/(2 \sqrt{\nhatp}) + \zscore/(2 \sqrt{\nhatq}) \Big ) \; .
	\end{align*}
	%where $\nhatp=\supp(\controls=\controlsval \mid \drule)$, and $\nhatq=\supp(\controls=\controlsval \mid \neg \drule)$.
	
	Note that \diff lower bounds the true probability mass difference in the population with confidence $1-\erate$. That is, there is a $1-\erate$ chance that the true difference is larger than \diff. For a fixed \zscore, the lower bound gets tighter with increasing sample size. In fact, it is easy to see that \diff is a \textbf{consistent} estimator of the true probability mass difference in the population; the introduced bias term vanishes asymptotically. More formally, for a fixed finite \zscore, we have
	\begin{align}
	\lim\limits_{\min(\nhatp, \nhatq) \rightarrow \infty} \zscore/(2 \sqrt{\nhatp}) + \zscore/(2 \sqrt{\nhatq}) = 0 \label{eq:consistency} \; .
	\end{align}
	
	As we deal with empirical probabilities, we can express \diff in terms of counts in a contingency table. Suppose that we have a contingency table as shown in Tab.~\ref{tab:ctable-refinement} (left) for a \controlsval stratum. Then we can express \diff in terms of the cell counts in the contingency table as
	\[
	\diff = \frac{\na}{\nhatp} - \frac{\nc}{\nhatq} - \frac{\zscore}{2 \sqrt{\nhatp}} - \frac{\zscore}{2 \sqrt{\nhatq}} \; .
	\]
	
	%\begin{table}[tb]
	%\begin{wraptable}{r}{5.2cm}
	%	\centering
	%	\caption{Contingency table for $\controls\!=\!\controlsval$.}
	%	\begin{tabular}{rrrr}
	%		\toprule
	%		&$\target=1$&$\target=0$&$\sum$\\
	%		\midrule
	%		$\drule=\true$ & \na & \nb & \nhatp\\
	%		$\drule=\false$ & \nc & \nd & \nhatq\\
	%		\midrule
	%		$\sum$ & \nyone & \nyzero & \n \\
	%		\hline
	%	\end{tabular}
	%	\label{tab:ctable-counts}
	%\end{table}
	%\end{wraptable}
	
	In the extreme case, however, a rule may select all or none of the instances in a stratum, resulting in $\nhatp\!=\!0$ or $\nhatq\!=\!0$, and hence the empirical conditional probability mass functions can be undefined. In practice, we encounter this problem often, both due to specificity of a rule as well as small sample sizes to begin with.
	
	As a remedy, we apply the Laplace correction to the score. 
	That is, we increment count of each cell in the contingency table by one. This way we start with a uniform distribution within each stratum of \controls. Hence a stratum of size \n increases to $\n+4$, and the total effective sample size increases from \samplesize to $\samplesize + 4 |\controlsspace|$. After applying Laplace correction, we have $\hatPr(\controlsval) = (\n+4)/(\samplesize + 4|\controlsspace|)$, and \diff is given by
	\[
	\diff = \frac{\na+1}{\nhatp+2} - \frac{\nc+1}{\nhatq+2} - \frac{\zscore}{2 \sqrt{\nhatp+2}} - \frac{\zscore}{2 \sqrt{\nhatq+2}} \; .
	\]
	After introducing the bias and applying the Laplace correction to the plug-in estimator, we obtain the \textbf{reliable} estimator of the causal effect as 
	\begin{align}\label{eq:reliable-effect}
	\re = \sum \limits_{\controlsval \in \controlsspace} \diff \empPmf(\controlsval) \; .
	\end{align}
	Note that \re is still a \textbf{consistent} estimator of the causal effect. In contrast to the plug-in estimator, the reliable estimator is much better at generalisation as it avoids overfitting. 
	
	%Suppose that we have a binary description variable $\attribute_1$ that influences a binary target \target, and a potential binary confounder $\control_1$ that influences both. In Fig.~\ref{fig:statistical-problem} (top), we show the Bayesian network representing causal influences among those three variables. 
	To demonstrate this, let us consider the following example.
	Suppose that we generate the population using the causal graph in Fig.~\ref{fig:bayesian-network}. In addition, we generate five uniformly distributed binary actionable variables $\attribute_2, \attribute_3, \dotsc, \attribute_6$ that are independent of each other as well as the rest of the variables. We can now numerically estimate the variance of the two estimators for a specific rule, e.g. $\drule \equiv \attribute_1\!=\!1 \land \attribute_2\!=\!0 \land \attribute_3\!=\!1 \land \attribute_4\!=\!1 \land \attribute_5\!=\!0\!\land\!\attribute_6\!=\!0$, which does not only contain causal variable $\attribute_1$ but also five actionable variables that are independent of the target \target.

	\begin{figure}[tb]
		\centering
		\ifincludepdf
			\includegraphics{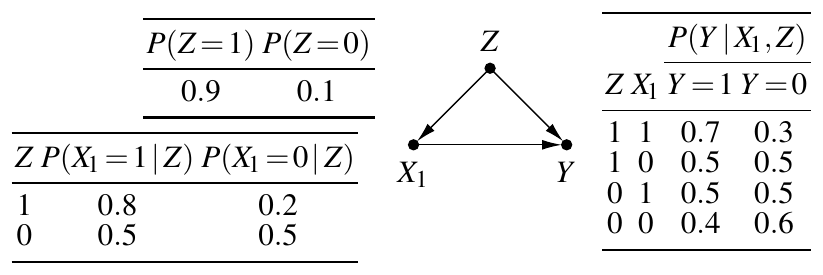}
		\else
			\tikzset{
				events/.style={draw, circle, fill=black, inner sep=0pt, outer sep=0pt, minimum size=1mm},
				link/.style={thin,black,-{Latex[length=2mm, width=1mm]}, shorten >=0mm},
			}
			\setlength{\tabcolsep}{1pt}
			\renewcommand{\arraystretch}{0.8}
			%	\begin{minipage}[c]{0.66\textwidth}
			\begin{tikzpicture}[node distance=1cm, font=\small]
			\node [events, label=above:\control] (z) {};
			\node [events, below left = of z, label=below:$\attribute_1$] (x) {};
			\node [events, below right = of z, label=below:\target] (y) {};
			
			%		\node [events, below = 2.5cm of x, label=below:\attribute] (x1) {};
			%		\node [events, below = 2.5cm of y, label=below:\target] (y1) {};
			
			\draw [link] (z) -- (x);
			\draw [link] (z) -- (y);
			\draw [link] (x) -- (y);
			%		\draw [link] (x1) -- (y1);
			
			\node [left = 1cm of z] {
				\begin{tabular}{CC}
				\toprule
				\Pr(\control\!=\!1) & \Pr(\control\!=\!0)\\
				\midrule
				0.9 & 0.1\\
				\bottomrule
				\end{tabular}
			};
			
			\node [below left = 5mm and 4mm of x, anchor=east] {
				\begin{tabular}{CCC}
				\toprule
				%			& \multicolumn{2}{C}{\Pr(\attribute_{\!1} \!\mid\! \control_{\!1})}\\
				%			\cmidrule{2-3}
				\control & \Pr(\attribute_{\!1}\!=\!1 \!\mid\! \control) & \Pr(\attribute_{\!1}\!=\!0 \!\mid\! \control)\\
				\midrule
				1 & 0.8 & 0.2\\
				0 & 0.5 & 0.5\\
				\bottomrule
				\end{tabular}
			};
			
			\node [above right = 1mm and 2mm of y, anchor=west] {
				\begin{tabular}{CCCC}
				\toprule
				& & \multicolumn{2}{C}{\Pr(\target \!\mid\! \attribute_{\!1}, \control)}\\
				\cmidrule{3-4}
				\control & \attribute_{\!1} & \target\!=\!1 & \target\!=\!0\\
				\midrule
				1 & 1 & 0.7 & 0.3\\
				1 & 0 & 0.5 & 0.5\\
				0 & 1 & 0.5 & 0.5\\
				0 & 0 & 0.4 & 0.6\\
				\bottomrule
				\end{tabular}
			};
			\end{tikzpicture}
		\fi
		%	\end{minipage}
		\hfill
		%	\begin{minipage}[c]{0.3\textwidth}
		\caption{A causal graph of three variables $\attribute_1$, \target and \control, and alongside their corresponding conditional probabilities as used in the running example. 
			%			A directed edge from \attribute to \target implies \attribute causes \target. Alongside each node is its corresponding conditional probability distribution table.
		}
		\label{fig:bayesian-network}
		%	\end{minipage}
	\end{figure}
	
	To do so, we draw stratified samples of increasing sizes from the population, and report \ee and \re scores averaged over $25$ simulations along with one sample standard deviation in Fig.~\ref{fig:statistical-problem} (left). We observe that variances of both estimators decrease with increasing sample size. Although the reliable estimator is biased, its variance is relatively low compared to the plug-in estimator. As a result of this low variance, unlike the plug-in estimator, the reliable estimator is indeed able to avoid overfitting, and hence, better at generalisation. Let $\drule^*$ denote the top-$1$ rule in the population, i.e.\ $\drule^* = \argmax_{\drule \in \drulespace} \e$. Let $\varphi^*$ denote the top-$1$ rule using the plug-in estimator, i.e. $\varphi^* = \argmax_{\drule \in \drulespace} \ee$, and $\rho^*$ denote the top-$1$ rule using the reliable estimator, i.e. $\rho^* = \argmax_{\drule \in \drulespace} \re$. In Fig.~\ref{fig:statistical-problem} (right), we plot $\esymb(\varphi^*)$ against $\esymb(\rho^*)$. 
	%Additionally we show the population optimal conditional effect $\esymb(\drule^* \mid \controls)$ as a reference. 
	We observe that with increasing sample sizes $\esymb(\rho^*)$ is both relatively closer, as well as converges much faster to the reference $\esymb(\drule^*)$, which is in agreement with both theory and intuition.
	
	\pgfplotsset{
		legend entry/.initial=,
		every axis plot post/.code={%
			\pgfkeysgetvalue{/pgfplots/legend entry}\tempValue
			\ifx\tempValue\empty
			\pgfkeysalso{/pgfplots/forget plot}%
			\else
			\expandafter\addlegendentry\expandafter{\tempValue}%
			\fi
		},
	}
	\begin{figure}[tb]
		\centering
		\begin{minipage}{0.5\columnwidth}
			\ifincludepdf
				\includegraphics{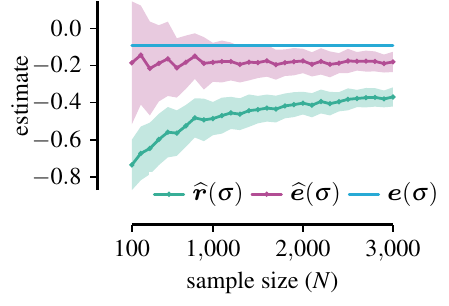}
			\else
			\begin{tikzpicture}
			\begin{axis}[
			thin,
			width=\columnwidth,
			height=3.5cm,
			xlabel=sample size (\samplesize), 
			ylabel=estimate, 
			legend style={draw=none,legend columns=3,at={(0.64,0.1)},anchor=north,line width=0.5pt},
			xmin=100,
			xmax=3000, 
			xtick={100, 1000, 2000, 3000},
			y tick label style={
				/pgf/number format/.cd,
				fixed,
				fixed zerofill,
				precision=1,
				/tikz/.cd
			}, 
			%		y tick label style={/pgf/number format/fixed, /pgf/number format/precision=2}, 
			y label style={yshift=30pt},
			x label style={yshift=-15pt},
			clean line,
			mark size=0.25pt,
			every axis plot/.append style={thick}
			]
			\addplot [name path=ru,draw=none] table[x=n, y expr=\thisrow{mean_reliable}+\thisrow{std_reliable}, header=true] {./data/variance.dat};
			\addplot [name path=rl,draw=none] table[x=n, y expr=\thisrow{mean_reliable}-\thisrow{std_reliable}, header=true] {./data/variance.dat};
			\addplot [legend entry=\re, proseco3, mark=*, mark options={fill=proseco3}] table[x=n, y=mean_reliable, header=true] {./data/variance.dat};
			\addplot [fill=proseco3!30] fill between[of=ru and rl];
			
			\addplot [name path=eu,draw=none] table[x=n, y expr=\thisrow{mean_plugin}+\thisrow{std_plugin}, header=true] {./data/variance.dat};
			\addplot [name path=el,draw=none] table[x=n, y expr=\thisrow{mean_plugin}-\thisrow{std_plugin}, header=true] {./data/variance.dat};
			\addplot [legend entry=\ee, proseco2, mark=*, mark options={fill=proseco2}] table[x=n, y=mean_plugin, header=true] {./data/variance.dat};
			\addplot [fill=proseco2!30] fill between[of=eu and el];
			
			\addplot [legend entry=\e, proseco1,mark=none] table[x=n, y=pop_ceff, header=true] {./data/variance.dat}; 
			%		\addplot+[only marks,proseco3, mark=*, mark options={fill=proseco3}] table[x=size, y=re(.|z), y error=std_re(.|z), header=true] {./data/variance.dat};
			%		\addplot+[only marks,proseco2, mark=*, mark options={fill=proseco2}] table[x=size, y=ee(.|z), y error=std_ee(.|z), header=true] {./data/variance.dat};
			%		\addplot+[proseco1,mark=none] table[x=size, y=e(.|z), header=true] {./data/variance.dat};
			%		\legend{$\rsymb(\drule' \mid \controls)$, $\eesymb(\drule' \mid \controls)$, $\esymb(\drule' \mid \controls)$}
			\end{axis}
			\end{tikzpicture}
			\fi
		\end{minipage}%
		\begin{minipage}{0.5\columnwidth}
			\ifincludepdf
				\includegraphics{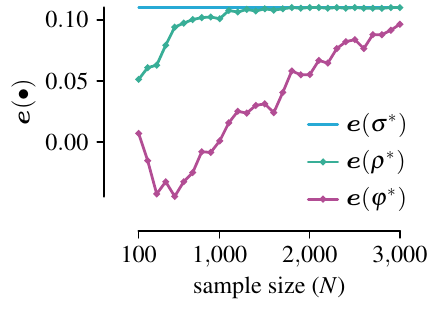}
			\else
			\begin{tikzpicture}
			\begin{axis}[
			clean mark,
			thin,
			mark size=0.5pt,
			cycle list name=gopalette, 
			width=\columnwidth,
			height=3.5cm,
			xlabel=sample size (\samplesize), 
			ylabel=$\esymb(\bullet)$, 
			legend style={draw=none,legend columns=1,at={(0.85,0.5)},anchor=north},
			xmin=100,
			xmax=3000, 
			xtick={100, 1000, 2000, 3000}, 
			%		y tick label style={/pgf/number format/fixed, /pgf/number format/precision=2}, 
			y label style={yshift=30pt},
			x label style={yshift=-15pt},
			every axis plot/.append style={thick},
			y tick label style={
				/pgf/number format/.cd,
				fixed,
				fixed zerofill,
				precision=2,
				/tikz/.cd
			},
			]
			
			\addplot [legend entry=$\esymb(\drule^*)$, proseco1] table[x=n, y=e*(.|z), header=true] {./data/generalisation_error.dat};
			\addplot+[legend entry=$\esymb(\rho^*)$, mark=*,proseco3] table[x=n, y=e(r*|z), header=true] {./data/generalisation_error.dat};
			\addplot+[legend entry=$\esymb(\varphi^*)$, mark=*,proseco2] table[x=n, y=e(p*|z), header=true] {./data/generalisation_error.dat};
			\end{axis}
			\end{tikzpicture}
			\fi
		\end{minipage}
		\caption{From the population generated using the causal graph of Fig.~\ref{fig:bayesian-network} together with $5$ additional independent actionable variables $\attribute_2, \dotsc, \attribute_6$, we show (left) variance of the plug-in and reliable estimator of the causal effect for a specific rule that contains variables that are independent of the target, and (right) generalisation error of the effect estimators.}
		%		$\drule \equiv \attribute\!=\!1 \land \attribute_2\!=\!0 \land \attribute_3\!=\!1 \land \attribute_4\!=\!1 \land \attribute_5\!=\!0 \land \attribute_6\!=\!0$
		\label{fig:statistical-problem}
	\end{figure}

	\section{Discovering Rules}\label{sec:algo}
	Now that we have a reliable and consistent estimator of the causal effect, we turn to discovering rules that maximize this estimator.
	%Now that we have a reliable and consistent score for the observational effect, we turn to the problem of discovering rules that yield maximal reliable observational effect. 
	Below, we provide the formal problem definition.
	
	\begin{definition}[Top-$k$ causal rule discovery] Given a sample 
		%	\sample 
		%$\dataset = \{\attributes, \target, \controls \}$
		%, 
		and a positive integer $k$, find a set $\topkrules \subseteq \drulespace$, $|\topkrules| = k$, such that for all $\drule \in \topkrules$ and $\varphi \in \drulespace \setminus \topkrules$, $\re \geq \rsymb(\varphi)$.
	\end{definition}
	
	Given the hardness of empirical effect maximisation problems~\cite{wang:05:complexity}, it is unlikely that the optimisation of the reliable causal effect allows a worst-case polynomial algorithm. While the exact computational complexity of the causal rule discovery problem is open, here we proceed to develop a practically efficient algorithm using the branch-and-bound paradigm.
	%, which is a standard approach in rule discovery.
	
	\subsection{Branch-and-Bound Search}
	The branch-and-bound search scheme
	%~\cite{mehlhorn:08:algo-book} 
	%provides a systematic way for efficient exhaustive search by restricting the exploration to promising candidates only. More formally, the goal of branch-and-bound is to 
	finds a solution that optimises the objective function $\func:\searchspace \rightarrow \mathbb{R}$, among a set of admissible solutions \searchspace, also called the search space. Let $\ext(\drule)$, also called the extension of \drule, denote the subset of instances in the sample that satisfy \drule.
	%, i.e. $\ext(\drule)=\{(\attributesval, \targetval, \controlsval) \in \sample \mid \drule(\attributesval)=\true\}$. 
	The generic search scheme for a branch-and-bound algorithm requires the following two ingredients:
	%~\cite{boley:17:dispersion}:
	\begin{itemize}[label=$\bullet$,leftmargin=*]
		\item A \textbf{refinement operator} $\refineop: \drulespace \rightarrow \powset{\drulespace}$ that is monotone, i.e.\ for $\drule, \varphi \in \drulespace$ with $\varphi = \refineop(\drule)$ it holds that $\ext(\varphi) \subseteq \ext(\drule)$, and that non-redundantly generates the search space \drulespace. That is, for every rule $\drule \in \drulespace$, there is a unique sequence of rules $\drule_0, \drule_1, \dotsc, \drule_\ell = \drule$ with $\drule_i = \refineop(\drule_{i-1})$.
		\item An \textbf{optimistic estimator} $\bound{\func}:\searchspace \rightarrow \mathbb{R}$ that provides an upper bound on the objective function attainable by extending the current rule to more specific rules. That is, it holds that $
		\bound{\func}(\drule) \geq \func(\varphi) \text{ for all } \varphi \in \drulespace \text{ with } \ext(\varphi) \subseteq \ext(\drule)$.
	\end{itemize}
	
	A branch-and-bound algorithm simply enumerates the search space \drulespace starting from the root $\phi$ using the refinement operator \refineop (branch), but based on the optimistic estimator $\bound{\func}$ prunes those branches that cannot yield improvement over the best rules found so far (bound). 
	%Depending on the branching operator, solutions may be reachable via multiple paths. 
	%To avoid evaluating solutions multiple times, more advanced implementations of branch-and-bound enumerate the search space non-redundantly, for example 
	%by sibling propagation~\cite{webb:95:opus} or 
	%by considering only closures~\cite{uno:03:lcm,boley:09:nonredundant}.
	
	The optimistic estimator depends on the objective function, and 
	there are many optimistic estimators for an objective function \func.
	Not all of these are equally well-suited in practice, as the tightness of the optimistic estimator determines its pruning potential. 
	%The ideal choice would therefore be the strictest optimistic estimator, which takes the maximum over all possible refinements of \drule, i.e.\ $\bound{\func}(\drule) = \max \{ \func(\varphi) \mid \ext(\varphi) \subseteq \ext(\drule) \text{ for all } \varphi \in \drulespace\}$. Computing this estimator, however, is as hard as the original optimisation problem itself, and therefore not a practical solution. As the next best choice, we instead look over the valid subsets of the extension of \drule without taking $\drulespace$ into account. That is, 
	We consider the \textbf{tight optimistic estimator}~\cite{grosskreutz:08:tightoest} given by
	\begin{align}
	\bound{\func}(\drule) &= \max \{ \func(Q) \mid Q \subseteq \ext(\drule) \}\\
	&\geq \max \{ \func(\varphi) \mid \ext(\varphi) \subseteq \ext(\drule) \text{ for all } \varphi \in \drulespace\} \; .
	\end{align}
	
	The branch-and-bound search scheme also provides an option to trade-off the optimality of the result for the speed. Instead of asking for the \func-optimal result, we can ask for the \approxfactor-approximation result for some approximation factor $\approxfactor \in (0, 1]$. This is done by relaxing the optimistic estimator, i.e.\ $\bound{\func}(\drule) \geq \approxfactor \func(\varphi)$ for all $\varphi \in \drulespace$ with $\ext(\varphi) \subseteq \ext(\drule)$. Lower \approxfactor generally yields better pruning, at the expense of guarantees on the quality of the solution.
	
	In our problem setting, we can define the refinement operator based on the lexicographical ordering of propositions:
	\[
	\refineop(\drule) = \{\drule \wedge \logicaltest_i \mid \logicaltest_i \in \setofprops, i > \max\{j : \logicaltest_j \in \setofprops^{(\drule)} \} \} \; ,
	\]
	where \setofprops is the set of propositions and $\setofprops^{(\drule)}$ is the subset of \setofprops used in \drule. 
	%That is, we maintain a lexicographical ordering of predicates in the pool, and only extend the current rule with a predicate from the pool that is larger than the largest predicate in current description rule in lexicographical ordering. 
	In practice, we need more sophisticated refinement operators in order to avoid the inefficiency resulting from a combinatorial explosion of equivalent rules. This, we can do by defining a closure operator on the rule language (see, e.g. Boley \& Grosskreutz~\cite{boley:09:closure}), which we also employ in our experimental evaluation.
	Next we derive an optimistic estimator for the objective function \rsymb.
	
	\subsection{Efficient optimistic estimator}
	If we look at the definition of \re in Eq.~\eqref{eq:reliable-effect},
	%To develop an optimistic estimator for the reliable estimator \re, we review its definition first, 
	%\[
	%\re = \sum_{\controlsval \in \controlsspace} \diff \hatPr(\controlsval) \; .
	%\]
	we see that, regardless of \drule, $\empPmf(\controlsval)$ remains the same for a \controlsval stratum. Thus, we can obtain an optimistic estimator of \re by simply bounding \diff for each \controlsval stratum. Let \oestdiff denote the optimistic estimator of \diff. Then the optimistic estimator of \re is given by 
	\[
	\oestrace(\drule) = \sum_{\controlsval \in \controlsspace} \oestdiff \empPmf(\controlsval) \; .
	\]
	%where \oestdiff is the optimistic estimator of \diff yet to be defined.
	% defined as
	%\[
	%\oestdiff \geq \diffsymb(\varphi, \controlsval) \text{ for all } \varphi \in \drulespace \text{ with } \ext(\varphi) \subseteq \ext(\drule) .
	%\]
	To derive the optimistic estimator \oestdiff, for clarity of exposition we first project \diff in terms of free variables \na and \nb, such that we can write
	\[
	\diffsymb(\na, \nb) = \frac{\na\!+\!1}{\na\!+\!\nb\!+\!2} - \frac{\nyone\!-\!\na\!+\!1}{\n\!-\!\na\!-\!\nb\!+\!2} - \frac{0.5\zscore}{ \sqrt{\na\!+\!\nb\!+\!2}} - \frac{0.5\zscore}{\sqrt{\n\!-\!\na\!-\!\nb\!+\!2}} \; .
	\]
	Suppose that we have a contingency table as shown in Tab.~\ref{tab:ctable-refinement} (left) for a \controlsval stratum with the rule \drule. The refinement of \drule, $\drule'=\refineop(\drule)$, results in a contingency table as shown in Tab.~\ref{tab:ctable-refinement} (right). Note that \nyone, \nyzero, and \n do not change within a \controlsval stratum regardless of the rule. Since $\ext(\drule') \subseteq \ext(\drule)$ holds for any $\drule' = \refineop(\drule)$, we have the following relations: $\nap \leq \na$ and $\nbp \leq \nb$.
	
	This implies that the subsets of the extensions of \drule will have contingency table counts \nap in the range $\{0, 1, \dotsc, \na\}$, and \nbp in the range $\{0, 1, \dotsc, b\}$. Let $\countconfig=\{0, 1, \dotsc, \na\} \times \{0, 1, \dotsc, \nb\}$. Then the optimistic estimator of \diff can be defined in terms of \countconfig as
	\begin{align}
	\oestdiff \geq \max \limits_{(\nap, \nbp) \in \countconfig} \diffsymb(\nap, \nbp) \; .
	\end{align}
	
	\begin{table}[tb]
		\centering
		\caption{Contingency tables for (left) a rule \drule, and (right) its refinement $\drule'=\refineop(\drule)$ for a \controlsval stratum of \controls.}
		\renewcommand{\arraystretch}{1.2}
		\begin{minipage}[c]{0.45\columnwidth}
			\centering
			\resizebox{\columnwidth}{!}{%
				\begin{tabular}{rrrr}
					\toprule
					&$\target\!=\!\targetval$&$\target\!\neq\!\targetval$&\\
					\midrule
					$\drule\!=\!\true$&$\na$&$\nb$&\\
					$\drule\!=\!\false$&$\nc$&$\nd$&\\
					\midrule
					$\sum$ & \nyone & \nyzero & \n \\
					\hline
				\end{tabular}
			}
		\end{minipage}
		\hfill
		\begin{minipage}[c]{0.45\columnwidth}
			\centering
			\resizebox{\columnwidth}{!}{%
				\begin{tabular}{rrrr}
					\toprule
					&$\target\!=\!\targetval$&$\target\!\neq\!\targetval$&\\
					\midrule
					$\drule'\!=\!\true$&$\nap$&$\nbp$&\\
					$\drule'\!=\!\false$&$\ncp$&$\ndp$&\\
					\midrule
					$\sum$ & \nyone & \nyzero & \n\\
					\hline
				\end{tabular}
			}
		\end{minipage}
		\label{tab:ctable-refinement}
	\end{table}
	
	The following proposition shows that we can obtain the \textbf{tight optimistic estimate} of \diff in linear time.
	%by simply taking the maximum value of \diffsymb from all possible configurations $\mathcal{C}$ in linear time. 
	
	\begin{restatable}{proposition}{propositiontightoest}\label{prop:tightoest}
		Let $\countconfig = \{0, 1, \dotsc, \na\} \times \{0, 1, \dotsc, \nb\}$ be the set of all possible configurations of $(\nap, \nbp)$ in Tab.~\ref{tab:ctable-refinement} (right) that can result from refinements of a rule \drule from the contingency table of Tab.~\ref{tab:ctable-refinement} (left). Then the \textbf{tight optimistic estimator} of \diff is given by 
		\begin{align}
		\ttight(\drule, \controlsval) = \max_{\nap \in \{0,1, \dotsc, \na\}} &\frac{\nap+1}{\nap+2} - \frac{\nyone-\nap+1}{\n-\nap+2} - \frac{\zscore}{2 \sqrt{\nap+2}} -\\
		&\phantom{} \frac{\zscore}{2 \sqrt{\n-\nap+2}} .
		\end{align}
		
		%for cases with $\left ( \frac{\numzeroresponseingroup}{\zscore} \right )^2 < 0$.
		%\alert{Perhaps we have to change the notation for tight optimistic estimator and the normal optimistic estimator for \race.}
	\end{restatable}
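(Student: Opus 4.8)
The plan is to collapse the two-dimensional maximisation $\max_{(\nap,\nbp)\in\countconfig}\diffsymb(\nap,\nbp)$ over the configuration set $\countconfig$ to the one-dimensional maximisation in the statement, by eliminating $\nbp$.

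First I would observe that $\diffsymb(\nap,\nbp)$ depends on its two free counts only through $\nap$ and the row total $m \coloneqq \nap+\nbp$; writing $\diffsymb(\nap,\nbp) = F(\nap,m)$ with
\[
F(\nap,m) = \frac{\nap+1}{m+2} - \frac{\nyone-\nap+1}{\n-m+2} - \frac{\zscore}{2\sqrt{m+2}} - \frac{\zscore}{2\sqrt{\n-m+2}},
\]
only the first two summands depend on $\nap$. Holding $m$ fixed, $\partial F/\partial\nap = \tfrac{1}{m+2}+\tfrac{1}{\n-m+2} > 0$, so $F$ is strictly increasing in $\nap$. Hence on each antidiagonal slice $\{(\nap,\nbp)\in\countconfig : \nap+\nbp=m\}$ the maximum is attained at the largest admissible $\nap$, namely $\nap=\min(m,\na)$ (using $\nap\le\na$ and $\nbp=m-\nap\ge0$).

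Next I would split on $m$: if $m\le\na$ the slice maximiser is $(\nap,\nbp)=(m,0)$, with value $\diffsymb(m,0)$; if $m>\na$ it is $(\na,m-\na)$, with value $F(\na,m)$. Therefore
\[
\max_{(\nap,\nbp)\in\countconfig}\diffsymb(\nap,\nbp) = \max\Big(\max_{0\le\nap\le\na}\diffsymb(\nap,0),\ \max_{\na<m\le\na+\nb}F(\na,m)\Big),
\]
and it remains to show that the second inner maximum is dominated by the first. It suffices to prove that $m\mapsto F(\na,m)$ is non-increasing on $[\na,\na+\nb]$, for then it is bounded by $F(\na,\na)=\diffsymb(\na,0)$, which already appears in the first term; what survives is exactly $\max_{0\le\nap\le\na}\diffsymb(\nap,0)$, the expression claimed. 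Since this is a single scan over $\nap\in\{0,\dots,\na\}$, it is computed in \bigo{\n}, which gives the linear-time assertion.

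The crux is the monotonicity of $F(\na,\cdot)$. Differentiating,
\[
\frac{d}{dm}F(\na,m) = -\frac{\na+1}{(m+2)^2} - \frac{\nyone-\na+1}{(\n-m+2)^2} + \frac{\zscore}{4(m+2)^{3/2}} - \frac{\zscore}{4(\n-m+2)^{3/2}},
\]
of which three terms are manifestly $\le0$; the obstacle is the confidence-width term $\zscore/(4(m+2)^{3/2})$, which must be dominated by the negative rational terms. The easy regime is $\sqrt{m+2}\le 4(\na+1)/\zscore$, where already $\zscore/(4(m+2)^{3/2})\le(\na+1)/(m+2)^2$; the complementary regime (small $\na$, large $m$) is where I expect the real work, using the remaining negative terms together with the admissible count ranges ($\nap\le\na$ and $m\le\na+\nb\le\n$). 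I would also dispatch the degenerate cases $\na=0$ and $\nb=0$ explicitly, since there the slice argument is vacuous and this monotonicity claim is most delicate.
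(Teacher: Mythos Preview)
Your decomposition differs from the paper's: the paper fixes $\nap$ and claims directly that $\nbp=0$ maximises $\diffsymb(\nap,\cdot)$, by grouping the first and third summands into $(2\nap+2-\zscore\sqrt{\nap+\nbp+2})/(2(\nap+\nbp+2))$ and asserting that $\nbp=0$ simultaneously maximises this combined term and the remaining two. Your route---fixing $m=\nap+\nbp$, maximising in $\nap$, then handling the residual range $m>\na$---is more explicit about what actually needs proving, and you are right to flag the monotonicity of $F(\na,\cdot)$ as the crux: the paper's proof simply asserts the corresponding step, and its assertion about the combined term is already wrong (for $\nap=0$ and $\zscore=2$ that term equals $1/(m+2)-1/\sqrt{m+2}$, which is increasing once $m>2$).

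The genuine gap is that the monotonicity you hope to establish in the ``complementary regime'' is false, so that part of the plan cannot be executed. Take $\zscore=2$, $\n=1000$, $\nyone=1$, $\na=0$, $\nb=500$: then $\countconfig=\{0\}\times\{0,\dots,500\}$, and direct evaluation gives $\diffsymb(0,0)\approx-0.241$ while $\diffsymb(0,500)\approx-0.091$. Your derivative $\tfrac{d}{dm}F(0,m)$ is indeed positive over most of this range---the two ``remaining negative terms'' are of order $(\n-m)^{-2}$ and $(\n-m)^{-3/2}$, far too small to offset $\zscore/(4(m+2)^{3/2})$ when $\n$ is large and $\nyone$ is small. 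Hence $\max_{(\nap,\nbp)\in\countconfig}\diffsymb(\nap,\nbp)>\max_{\nap\in\{0\}}\diffsymb(\nap,0)$, and the formula in the proposition is not the tight optimistic estimator here (it is not even a valid upper bound on an attainable $\diffsymb$ value). The degenerate case $\na=0$ that you planned to ``dispatch explicitly'' is exactly where the claim breaks, and the same configuration refutes the paper's argument. Neither proof can be completed without an additional hypothesis linking $\zscore$, $\na$, and the admissible range of $m$ (your ``easy regime'' condition $\sqrt{m+2}\le 4(\na+1)/\zscore$ for all $m\le\na+\nb$ is one such).
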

	\proofApx
	
	\section{Experiments}\label{sec:exp}
	We implemented the branch-and-bound search with priority-queue in the free and open source \realkd\footnote{\realkdlink} Java library, and provide the source code online.\!\footnote{\codelink} 
	%We use priority-queue based implementation of branch-and-bound search. 
	%As such, the best candidate node---based on its optimistic estimate---is first expanded from a set of candidate nodes on the priority queue. 
	All experiments were executed single threaded on  Intel Xeon E5-2643 v3 machine with $256$ GB memory running Linux. We report the results at $\zscore=2.0$, which corresponds to a $95.45\%$ confidence level, and search for optimal top-$k$ rules, i.e. $\approxfactor=1.0$, unless stated otherwise.
	
	\subsection{Performance of the Estimators}\label{subsec:goodness}
	First we evaluate the performance of the proposed estimators of causal effect \e. To this end, we measure the \emph{statistical efficiency} of an estimator by its \textbf{mean squared error (MSE)} as it captures the two most important properties of an estimator: bias and variance. As the optimistic bias is strongest for the best rule and decreases monotonically, we consider the top-1 search here.
	Thus the parameter of interest in the population is the maximum value of the causal effect $\esymb(\drule^*)$, where $\drule^*$ is the maximiser in the population, i.e. $\drule^*= \argmax_{\drule \in \drulespace} \esymb(\drule)$. Using the reliable estimator \rsymb, for instance, we get the reliable effect maximiser $\rho^*$ in the sample, i.e. $\rho^*=\argmax_{\drule \in \drulespace} \re$. As such, $\esymb(\rho^*)$ is our estimate of the estimand $\esymb(\drule^*)$, using \rsymb. Note that $\esymb(\rho^*)$ is a function of the sample, and thus a random variable. Therefore the MSE of $\esymb(\rho^*)$ is given by
	\[
	\mse \left (\esymb(\rho^*) \right ) = \texpectation{\esymb(\rho^*)}{\left ( \esymb(\rho^*) - \esymb(\sigma^*) \right )^2 } \; .
	\]
	Likewise, we can obtain the MSE of $\esymb(\varphi^*)$ using the plug-in estimator \eesymb, where $\varphi^*=\argmax_{\drule \in \drulespace} \ee$.
	
	For this evaluation, first we generate the population using the causal graph in Fig.~\ref{fig:bayesian-network}, and add five \emph{independent} uniformly distributed binary actionable variables $\attribute_2, \attribute_3, \dotsc, \attribute_6$. Then, for a given sample size \samplesize, we sample \samplesize observations from that population, and compute the MSE of the two estimators over $100$ samples. In Fig.~\ref{fig:mse-precision} (left), we show the MSE of the estimators for increasing sample sizes $\samplesize=100, 200, \dotsc, 3000$. As expected, we observe that the MSE decreases for both estimators as the sample size increases. The reliable estimator, however, has a consistently lower MSE than the plug-in estimator. These results show that the proposed reliable estimator is a better choice for optimisation (search) than the naive plug-in estimator.
	
	\subsection{Comparison with the state-of-the-art}
	%In Section~\ref{subsec:goodness}, we saw the advantage of the reliable estimator over the plug-in estimator for the optimisation purpose. 
	%The experiment on the goodness of the estimators of the conditional observational effect \e vindicated the strength of the reliable estimator for optimisation purpose.
	%So far we studied the goodness of the estimators of the conditional observational effect \e. Those results empirically vindicate the strength of the reliable estimator for rule optimisation task.
	Next we investigate the quality of rules inferred using the reliable estimator, and compare against other state-of-the-art measures. Although there exists a number of algorithms to infer interesting rules from data, most of them do not provide us \emph{optimal} causal rules. Therefore, in this evaluation, we focus mainly on effect measures they employ, as we can always exhaustively search for optimal rules using those effect measures as long as we keep the rule language small. 
	
	From the exhaustive list of effect measures, we consider the weighted relative accuracy~\cite{flach:99:wracc} for comparison, primarily because it is widely used in inductive rule learners. In addition, we also consider the plug-in estimator without control variables, i.e. \ee with $\controls \coloneqq \emptyset$. In our case, the weighted relative accuracy of the event \drule for an outcome \targetval at the population level is given by
	\[
	\wracc = \pmf(\drule) \Big ( \pmf(\targetval \mid \drule) - \pmf(\targetval) \Big ) \; .
	\]
	In particular, we apply Laplace correction to the plug-in estimators of both the weighted relative accuracy, \ewracc, and $\ee \mid \controls \coloneqq \emptyset$. 
	
	To obtain synthetic data with the known ground truth, we sample observations from the population in our previous evaluation (Sec.~\ref{subsec:goodness}).
	%To generate synthetic data, we consider the toy causal graph from Fig.~\ref{fig:bayesian-network}. First we generate the population using the joint distribution of the nodes in the causal graph. Then we add five other uniformly distributed binary actionable variables that are independent of each other as well as the rest of the variables. 
	In the causal graph (Fig.~\ref{fig:bayesian-network}), only one actionable variable ($\attribute_1$) affects the target \target; other actionable variables $\attribute_2, \dotsc, \attribute_6$ are independent. As such, only one rule $\drule \equiv \attribute_1=1$ is \emph{relevant}.\!\footnote{The complementary rule $\bar{\drule} \equiv \attribute_1=0$ has a \emph{negative} effect.} We assess an effect measure by evaluating the probability of recovering that single ``true'' rule.
	% from $K$ samples. 
	%percentage of relevant rules among $K$ optimal top-1 rules discovered using the effect measure from $K$ samples.
	%\!\footnote{This measure is also called precision at $k$ in information retrieval literature. }
	
	%By searching for the optimal top-1 rule, we can measure the precision then evaluate the quality of the result by \textbf{precision}, whether top-1 rule was relevant or not.

	%To evaluate the quality of the results, we use precision at $k$, which is the fraction of $k$ rules that are relevant. 
	%\note{
	To this end, we take 100 samples, find optimal top-1 rule from each sample, and then calculate the proportion of ``true'' rule among those 100 optimal top-1 rules.
	In Fig.~\ref{fig:mse-precision} (right), we report the probability of recovering the ``true'' rule at various sample sizes.
	%\note{$K=100$ samples for increasing sample sizes---what is K samples for increasing sample sizes N??}. 
	We observe that the reliable causal effect is consistently better than other effect measures, and its probability of recovering the core rule exactly approaches $1.0$ rapidly with increasing sample size.
	%all the effect measures show similar performance with a sufficient sample size. T
	%
	Other effect measures perform worse particularly when the sample size is small. These results demonstrate that by conditioning on the control variables, reliable causal effect infers relevant causal rules, even on small sample sizes.
	%}
	
	%\note{Note that the weighted relative accuracy only takes into account one event, say $\drule=\true$, and excludes the other, $\drule=\false$. In the causal graph, the distribution of relevant \attribute, however, is skewed by \control. The marginal probability $\Pr(\target=1)$, and the conditional distribution $\Pr(\target=1 \mid \drule=\true)$ do not differ as much on smaller sample sizes. As the reliable conditional effect considers both statistical and structural aspects of the problem, it performs consistently well on all sample sizes.}
	
	\begin{figure}[tb]
		\centering
		\begin{minipage}{0.5\columnwidth}
			\ifincludepdf
				\includegraphics{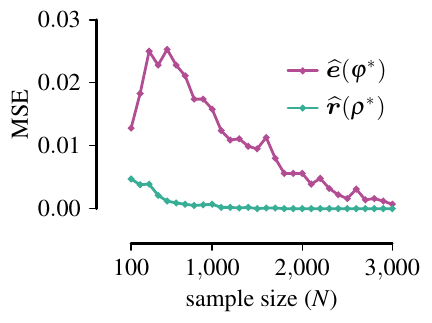}
			\else
			\begin{tikzpicture}
			\begin{axis}[
			clean mark,
			thin,
			mark size=0.5pt,
			cycle list name=gopalette, 
			width=\columnwidth,
			height=3.5cm,
			xlabel=sample size (\samplesize), 
			ylabel=MSE, 
			legend style={draw=none,legend columns=1,at={(0.8,0.85)},anchor=north},
			xmin=100,
			xmax=3000, 
			xtick={100, 1000, 2000, 3000},
			ymin=0.0,
			ymax=0.03,
			%		y tick label style={/pgf/number format/fixed, /pgf/number format/precision=2}, 
			y label style={yshift=30pt},
			x label style={yshift=-15pt},
			every axis plot/.append style={thick},
			y tick label style={
				/pgf/number format/.cd,
				fixed,
				fixed zerofill,
				precision=2,
				/tikz/.cd
			},
			]
			
			\addplot [legend entry=$\eesymb(\varphi^*)$, mark=*, proseco2] table[x=n, y=MSE(e), header=true] {./data/mse.dat};
			\addplot+[legend entry=$\rsymb(\rho^*)$, mark=*, proseco3] table[x=n, y=MSE(r), header=true] {./data/mse.dat};
			\end{axis}
			\end{tikzpicture}
			\fi
		\end{minipage}%
		\begin{minipage}{0.5\columnwidth}
			\ifincludepdf
				\includegraphics{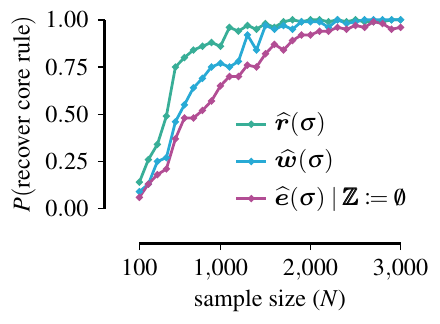}
			\else
				\begin{tikzpicture}
			\begin{axis}[
			thick,
			width=\columnwidth,
			height=3.5cm,
			xlabel=sample size (\samplesize), 
			ylabel={$\Pr($recover core rule$)$}, 
			legend style={draw=none,legend columns=1,anchor=west, at={(0.35, 0.25)}, line width=0.5pt},
			xmin=100,
			xmax=3000, 
			ymin=0,
			ymax=1.0,
			ytick={0, 0.25, ..., 1.0},
			xtick={100, 1000, 2000, 3000}, 
			%	y tick label style={/pgf/number format/fixed, /pgf/number format/precision=2}, 
			y tick label style={
				/pgf/number format/.cd,
				fixed,
				fixed zerofill,
				precision=2,
				/tikz/.cd
			},
			y label style={yshift=30pt},
			x label style={yshift=-15pt},
			clean errorbar,
			every axis plot/.append style={thick}
			]
			%	\addplot [name path=ru,draw=none] table[x=n, y expr=\thisrow{mean_p1_rceff}+\thisrow{std_p1_rceff}, header=true] {./data/precision_at_k.dat};
			%	\addplot [name path=rl,draw=none] table[x=n, y expr=\thisrow{mean_p1_rceff}-\thisrow{std_p1_rceff}, header=true] {./data/precision_at_k.dat};
			\addplot [legend entry=\re, proseco3, mark=*, mark options={fill=proseco3}] table[x=n, y=mean_p1_rceff, header=true] {./data/precision_at_k.dat};
			%	\addplot [fill=proseco3!30] fill between[of=ru and rl];
			
			%	\addplot [name path=wu,draw=none] table[x=n, y expr=\thisrow{mean_p1_wracc}+\thisrow{std_p1_wracc}, header=true] {./data/precision_at_k.dat};
			%	\addplot [name path=wl,draw=none] table[x=n, y expr=\thisrow{mean_p1_wracc}-\thisrow{std_p1_wracc}, header=true] {./data/precision_at_k.dat};
			\addplot [legend entry=\ewracc, proseco1, mark=*, mark options={fill=proseco1}] table[x=n, y=mean_p1_wracc, header=true] {./data/precision_at_k.dat};
			%	\addplot [fill=proseco1!30] fill between[of=wu and wl];
			
			%	\addplot [name path=eu,draw=none] table[x=n, y expr=\thisrow{mean_p1_eff}+\thisrow{std_p1_eff}, header=true] {./data/precision_at_k.dat};
			%	\addplot [name path=el,draw=none] table[x=n, y expr=\thisrow{mean_p1_eff}-\thisrow{std_p1_eff}, header=true] {./data/precision_at_k.dat};
			\addplot [legend entry=$\ee \mid \controls \coloneqq \emptyset$, proseco2, mark=*, mark options={fill=proseco2}] table[x=n, y=mean_p1_eff, header=true] {./data/precision_at_k.dat};
			%	\addplot [fill=proseco2!30] fill between[of=eu and el];
			
			%	\addplot+[proseco2, mark=*, mark options={fill=proseco2}] table[x=n, y=mean_p1_eff, header=true] {./data/precision_at_k.dat};
			%	\addplot+[proseco1, mark=*, mark options={fill=proseco1}] table[x=n, y=mean_p1_wracc, header=true] {./data/precision_at_k.dat};
			%	\addplot+[proseco3, mark=*, mark options={fill=proseco3}] table[x=n, y=mean_p1_rceff, y error=std_p1_rceff, header=true] {./data/precision_at_k.dat};
			%	\addplot+[proseco2, mark=*, mark options={fill=proseco2}] table[x=n, y=mean_p1_eff, y error=std_p1_eff, header=true] {./data/precision_at_k.dat};
			%	\addplot+[proseco1, mark=*, mark options={fill=proseco1}] table[x=n, y=mean_p1_wracc, y error=std_p1_wracc, header=true] {./data/precision_at_k.dat};
			%	\legend{$\rsymb(\drule^* \mid \controls)$, $\eesymb(\drule' \mid \controls)$, $\esymb(\drule' \mid \controls)$}
			\end{axis}
			\end{tikzpicture}
			\fi
		\end{minipage}
		\caption{(left) Mean squared error (MSE) of the plug-in estimator \eesymb and the reliable estimator \rsymb of the population optimal causal effect $\esymb(\drule^*)$. (right) The probability of recovering the core rule from
			%			precision at $k\!=\!1$, averaged over 
			100 samples, for the Laplace-corrected plug-in estimator of causal effect, \ee, with an empty \controls, the Laplace corrected plug-in estimator of weighted relative accuracy, \ewracc, and the reliable estimator of causal effect, \re.}
		\label{fig:mse-precision}
	\end{figure}
	
	\subsection{Qualitative Study on Real-World Data}
	Next we investigate whether rules discovered by reliable causal effect are meaningful. 
	To this end, we consider the \emph{titanic} training set from the Kaggle repository.\!\footnote{\url{https://www.kaggle.com/c/titanic}} 
	The sinking of RMS Titanic is one of the most notorious shipwrecks in history. One of the reasons behind such tragic loss of lives was the lack of lifeboats. During the evacuation, some passengers were treated differently than the others; some groups of people were, hence, more likely to survive than the others. Thus, it is of interest to find the conditions that have causal effect on the \emph{survival} (\target). The dataset contains demographics and travel attributes of the passengers.
	
	Existing causal discovery methods are not applicable as we have a mixed data set. We also do not know the complete causal graph. Therefore we take a pragmatic approach using domain knowledge. If we could perform a hypothetical intervention of changing the sex of a person, this will also change their title, but not the other way around. Thus it is reasonable to assume that \emph{sex} causes \emph{title}. As putting them together in \descriptionVars would violate criterion (c) of Def.~\ref{def:admissible-input}, we only keep one of them, i.e. \emph{sex}, in \descriptionVars. Similarly we can argue that fare causes passenger class. Therefore we only keep \emph{class} in \descriptionVars. Overall, none of the variables seem to confound (co-cause) \target and other variables. Altogether, we have 
	\begin{align}
	\controls &\coloneqq \emptyset\\
	\target &\coloneqq \text{\emph{survived}}\\
	\descriptionVars &\coloneqq \{\text{\emph{class}}, \text{\emph{pname}}, \text{\emph{sex}}, \text{\emph{age}}, \text{\emph{sib\_sip}}, \text{\emph{par\_ch}}, \text{\emph{embarked}} \}
	\end{align}
	
	In Tab.~\ref{res:titanic}, we present optimal top-3 causal rules discovered from the input above using the proposal method. The coverage of a rule is a fraction of instances that belong to its extension, i.e. $\coverage(\drule) = |\ext(\drule)|/\samplesize$. 
	
	We observe that being a female passenger from the first, or the second class has the highest effect on survival with a reliable causal effect estimate of $\rsymb(\drule_1) = 0.576$. It is well-known that passengers from different classes were treated differently during evacuation. What is interesting is that although females were more likely to survive, this only applied to the females from the first and the second class; this is also corroborated by the fact that roughly half of the females from the third class did not survive the mishap compared to the only one-tenth from the other two classes combined. 
	
	The other two rules corroborate the adage of ``women and children'' first. The fact that all those rules came out on top with mere $<20\%$ coverage shows that reliable causal effect can discover rare rules.
	
	\begin{table}[tb]
		\centering
		\caption{Top-3 causal rules discovered on the titanic dataset with ``survival'' as a target variable. 
			%		We start without control variables in the first iteration. In the subsequent iterations, we control for top$-1$ rules from previous iterations. 
			%		``par-ch'' stands for the number of parents/children aboard, and ``sib-sp'' for the number of siblings/spouses aboard.
		}
		\label{res:titanic}
		\renewcommand{\arraystretch}{1.2}
		\resizebox{\columnwidth}{!}{%
			\begin{tabular}{p{6cm} R@{\hskip 2mm} R@{\hskip 2mm}}% R@{}}
				\toprule
				Top-$3$ rules (\drule) & \coverage(\drule) & \re\\
				\midrule
				class $\leq 2$ \andl sex = female & 0.1907 & 0.576\\
				class $\leq 2$ \andl sex = female \andl par-ch $\leq 2$ & 0.1885 & 0.573\\
				class $\leq 2$ \andl sex = female \andl sib-sp $\leq 2$ & 0.1874 & 0.572\\
				\bottomrule
			\end{tabular}
		}
	\end{table}

	\section{Discussion}\label{sec:disc}	
	The main focus of this discussion are the assumptions (in Def.~\ref{def:admissible-input}) required for causal rule discovery and their practical implications. First we note that it is \emph{impossible} to do causal inference from observational data without making assumptions, as the joint distribution alone cannot tell us what happens when the system undergoes changes through interventions~\cite{pearl:09:book}. Through causal diagrams, we make such assumptions more explicit and transparent. Often, the more explicit the assumption, the more criticism it invites. Explicit assumptions, however, can be good as they provide a way to verify our models, and improve them. Def.~\ref{def:admissible-input}, for instance, provides guidelines for variable selection process for causal rule discovery.
	
	For the inferred rules to be causal, the input must be admissible. Although criterion (a), (b) and (d) are fairly standard in the literature, criterion (c) is new and specific to rule discovery. It requires that there are no edges between actionable variables. Over a large group of actionable variables, this can be a strong assumption. The naive way to remove this assumption would be to include rest of the actionable variables $\descriptionVars \setminus \attributes$ in the set of control variables to block any spurious path between actionable variables \attributes in a rule \drule and the target \target via $\descriptionVars \setminus \attributes$. By doing so, however, we may not only violate other criteria, but the search also gets complicated. 
	
	On the statistical side, a direct correction for controlling the familywise error rate of confidence intervals would not lead to an effective approach to discover causal effects. Therefore, we followed the statistical learning approach and designed an estimator with small generalisation error. This use of confidence intervals is reminiscent of, e.g., upper confidence bound strategies in multi-armed bandit problems (yielding an optimal policy despite not controlling the familywise error rate of reward estimates).
	
	\section{Conclusion}
	Traditional descriptive rule discovery techniques do not suffice for discovering reliable causal rules from observational data. Among the sources of inconsistency we have that observational effect sizes are often skewed by the presence of confounding factors. Second, naive empirical effect estimators have a high variance, and, hence, their maximisation is highly optimistically biased unless the search is artificially restricted to high frequency events. In this work, we presented a causal rule discovery approach that addresses both these issues. We measured the causal effect of a rule from observational data by adjusting for the effect of potential confounders. In particular, we gave the graphical criteria under which causal rule discovery is possible. To discover reliable causal rules from a sample, we proposed a conservative and consistent estimator of the causal effect, and derived an efficient and exact algorithm based on branch-and-bound search that maximises the estimator. The proposed algorithm is efficient and finds meaningful rules. 
%	It would make an interesting future work to develop an effect measure, and the algorithm that works for a continuous real-valued target and gracefully handles a large set of control variables.

	\bibliographystyle{abbrv}
	\bibliography{paper}
	
	\appendix
	
	\section*{Proof of Proposition~\ref{prop:backdoor}}
	\propbackdoor*
	\begin{proof}
		We prove this proposition graphically. Recall the criteria for the input $(\actionables, \target, \controls)$ to be admissible.
		\begin{enumerate}[leftmargin=*, label=(\alph*)]
			\setlength\itemsep{0em}
			\item there are no outgoing edges from \target to any \attribute in \actionables,
			\item no outgoing edges from any \attribute in \actionables to any \control in \controls,
			\item no edges between actionable variables \actionables, and
			\item no edges between any unobserved \latent and \attribute in \actionables.
		\end{enumerate}
		Observe that any spurious path between any \attribute in \attributes and \target can be formed through one of the following ways:
		\begin{itemize}[leftmargin=*]
			\setlength\itemsep{0em}
			\item from \target to \attributes directly,
			\item via control variables \controls,
			\item via other actionable variables $\descriptionVars \setminus \attributes$,
			\item via latent variables \latents.
		\end{itemize}
		Criterion (a) rules out trivial spurious paths from \target to $\descriptionVars \supseteq \attributes$ that cannot be blocked by any \controls. Criterion (b) ensures that any spurious path unblocked by one control variable is blocked by another. Criterion (c) ensures that there are no spurious paths between any subset of actionable variables \attributes in the rule \drule and \target via other actionable variables $\descriptionVars \setminus \attributes$. To see this, suppose that we have two actionable variables $\attribute_1$ and $\attribute_2$, and a rule $\drule \equiv \attribute_1=1$. If the causal graph contains the path $\attribute_1 \leftarrow \attribute_2 \rightarrow \target$, we will have a biased estimate of the causal effect. Criterion (d) is really just a form of standard causal sufficiency~\citep{scheines:97:intro}. As there are no edges between any latent variable \latent in \latents and any \attribute in \descriptionVars, by conditioning on \controls, we block any spurious path between any \attributes and \target via \latents. Thus, if the input is admissible, the control variables \controls block all spurious paths between any subset \attributes of \descriptionVars and \target.
	\end{proof}
	
	\section*{Proof of Theorem~\ref{theorem:causaleffect}}
	\theoremcausaleffect*
	\begin{proof}
		Recall that $\pmf(\targetval \mid \doo(\policy_\drule))$ is given by
		\begin{align}
		\pmf(\targetval \mid \doo(\policy_\drule)) &=\sum\limits_{\controlsvec \in \controlsspace} \pmf(\controlsval) \sum\limits_{\drule(\actionablesVal)=\true} \pmf(\targetval \mid \attributesval, \controlsval) \policy_\drule(\doo(\actionablesVal))
		\intertext{Using the stochastic policy, $\pmf(\targetval \mid \doo(\policy_\drule))$ reduces to}
		%	\begin{align}
		&=\sum\limits_{\controlsvec \in \controlsspace} \pmf(\controlsval) \sum\limits_{\substack{\drule(\actionablesVal)=\true}} \pmf(\targetval \mid \attributesval, \controlsval) \pmf(\attributesval \mid \drule, \controlsval)\\
		&= \sum\limits_{\controlsvec \in \controlsspace} \pmf(\controlsval) \sum\limits_{\drule(\actionablesVal)=\true} \pmf(\targetval \mid \attributesval, \controlsval) \frac{\pmf(\attributesval, \drule \mid \controlsval)}{\pmf(\drule \mid \controlsval)}
		\intertext{for $\drule(\actionablesVal)=\true$, it holds that $\pmf(\attributesval, \drule \mid \controlsval)=\pmf(\attributesval \mid \controlsval)$; thus}
		\pmf(\targetval \mid \doo(\policy_\drule)) &= \sum\limits_{\controlsvec \in \controlsspace} \pmf(\controlsval) \sum\limits_{\drule(\actionablesVal)=\true} \pmf(\targetval \mid \attributesval, \controlsval) \frac{\pmf(\attributesval \mid \controlsval)}{\pmf(\drule \mid \controlsval)}\\
		%&=\sum\limits_{\controlsvec \in \controlsspace} \pmf(\controlsval) \sum\limits_{\substack{\actionablesVal \in \actionablesSubsetSpace\\ \drule(\actionablesVal)=\true}} \frac{\pmf(\targetval, \attributesval \mid \controlsval)}{\pmf(\drule \mid \controlsval)}\\
		&=\sum\limits_{\controlsvec \in \controlsspace} \frac{\pmf(\controlsval)}{\pmf(\drule \mid \controlsval)} \sum\limits_{\drule(\actionablesVal)=\true} \pmf(\targetval, \attributesval \mid \controlsval)
		\intertext{
			since $\sum_{\drule(\attributesval)=\true} \Pr(\targetval, \attributesval \mid \controlsval) = \Pr(\targetval, \drule \mid \controlsval)$, this results in
		}
		\pmf(\targetval \mid \doo(\policy_\drule)) &=\sum\limits_{\controlsvec \in \controlsspace} \frac{\pmf(\controlsval)}{\pmf(\drule \mid \controlsval)}  \pmf(\targetval, \drule \mid \controlsval)\\
		&=\sum\limits_{\controlsvec \in \controlsspace} \pmf(\controlsval) \pmf(\targetval \mid \drule, \controlsval)\\
		&= \expectation{\pmf(\targetval \mid \drule, \controls)} \; .
		\end{align}
		Substituting the above in the definition of $\ec$, we get
		\begin{align}
		\ec &=  \pmf(\targetval \mid \doo(\policy_\drule)) - \pmf(\targetval \mid \doo(\policy_{\bar{\drule}}))\\
		%	Putting it together, we can now compute the \textbf{causal effect} of any rule $\drule \in \drulespace$ on \target from \emph{observational} data as
		%	\begin{align}
		&= \expectation{\pmf(\targetval \mid \drule, \controls)} - \expectation{\pmf(\targetval \mid \bar{\drule}, \controls)} \; .
		%	\label{eq:rule-causal-effect}
		\end{align}
	\end{proof}

	\section*{Proof of Proposition~\ref{prop:tightoest}}
	\propositiontightoest*
	\begin{proof}
		The expression for $\diffsymb(\nap, \nbp)$ from the contingency table in Tab.~\ref{tab:ctable-refinement} (right) is given by
		\begin{align}
		\diffsymb(\nap, \nbp) &= \frac{\nap+1}{\nap+\nbp+2} - \frac{\nyone-\nap+1}{\n-\nap-\nbp+2} - \frac{\zscore}{2 \sqrt{\nap+\nbp+2}} -\\ &\phantom{test} \frac{\zscore}{2 \sqrt{\n-\nap-\nbp+2}} .
		\end{align}
		Combining the first and the third term above, we get
		\begin{align}
		\lambdaz(\nap, \nbp) &= \frac{2\nap + 2 - \zscore \sqrt{\nap+\nbp+2}}{2(\nap+\nbp+2)} - \frac{\nyone-\nap+1}{\n-\nap-\nbp+2} -\\
		&\phantom{test} \frac{\zscore}{2 \sqrt{\n-\nap-\nbp+2}} . 
		\end{align}
		Note that if we fix the value of \nap, then the value of \nbp that maximises $\diffsymb(\nap, \nbp)$ has to maximise the first term above, but minimise the other two terms. Observe that $\nbp=0$, out of $\nbp \in \{0, 1, \dotsc, b\}$, does both simultaneously. Thus we have the following relation: $\diffsymb(\nap, 0) > \diffsymb(\nap, \nbp) \text{ for all } \nbp > 0$.
		
		The tight optimistic estimator of \diff is then the maximum value over all possible configurations \countconfig, i.e.
		\begin{align}
		\ttight(\drule, \controlvec) &= \max_{\nap \in \{0,1, \dotsc, a\}} \diffsymb(\nap, 0)\\
		&= \max_{\nap \in \{0,1, \dotsc, a\}} \frac{\nap+1}{\nap+2} - \frac{\nyone-\nap+1}{\n-\nap+2} - \frac{\zscore}{2 \sqrt{\nap+2}} -\\&\phantom{\max_{\nap \in \{0,1, \dotsc, a\}}} \frac{\zscore}{2 \sqrt{\n-\nap+2}} .
		\end{align}
	\end{proof}
	
	\section*{Effect of \zscore}
	The reliable estimator \rsymb of causal effect has a user-defined parameter \zscore that represents our confidence in the point estimate. It is easy to see that a 0\% confidence level corresponds to the plug-in estimator \eesymb, as then we would have a z-score of $\zscore=0$. As we increase the confidence level, \re gets more conservative. How does this affect the performance of the reliable estimator? 
	
	To answer this question, we sample observations from the population in our previous evaluation (Sec.~\ref{subsec:goodness}). In Fig.~\ref{fig:mse_vs_beta}, we plot the mean squared error (MSE) of $\esymb(\rho^*)$ that uses the reliable estimator \rsymb at various confidence levels: $50\%, 60\%, 70\%, 80\%, 90\%, 99\%$. We observe that $\mse \left (\esymb(\rho^*) \right)$ decreases with increasing sample sizes at all confidence levels. The MSE is much better at higher confidence levels particularly in the beginning when sample sizes are small. These results suggest that higher confidence levels lead to more reliable rules, in terms of their closeness to the maximum true effect. 
	It might be tempting then to go for a 100\% confidence level. However, 100\% confidence is only achievable with an infinite sample size. Moreover, at a 100\% confidence level, we have $\zscore=\infty$, and in turn $\re=\infty$.
	This suggests that we can calibrate the optimal \zscore for a given sample size somewhere below the 100\% confidence level.
	
	\begin{figure}[tb]
		\ifincludepdf
			\includegraphics{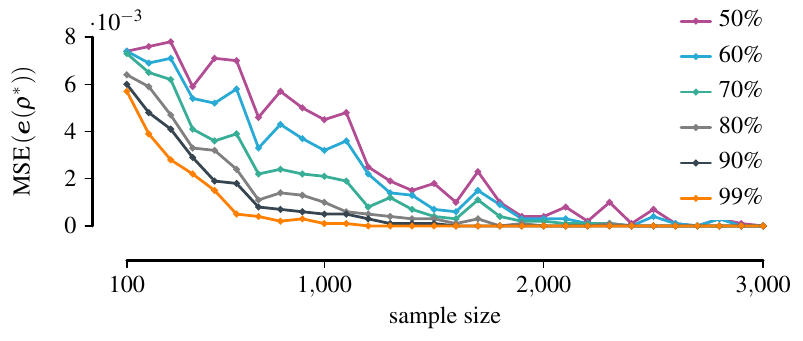}
		\else
			\begin{tikzpicture}
			\begin{axis}[
			clean mark,
			thin,
			mark size=0.5pt,
			cycle list name=gopalette, 
			width=0.95\columnwidth,
			height=3.5cm,
			ylabel=$\mse \left (\esymb(\rho^*) \right)$,
			xlabel=sample size, 
			legend style={draw=none, legend columns=1, at={(0.94,1.2)},anchor=north},
			ymax=0.008,
			xtick={100, 1000, 2000, 3000},
			y label style={yshift=32pt},
			x label style={yshift=-15pt},
			every axis plot/.append style={thick},
			scaled y ticks=base 10:3, 
			y tick scale label style={xshift=-15pt}
			]
			\addplot [legend entry={50\%}, mark=*, proseco2] table[x=n, y=b0.67, header=true] {./data/mse_vs_beta.dat};
			\addplot [legend entry={60\%}, mark=*, proseco1] table[x=n, y=b0.84, header=true] {./data/mse_vs_beta.dat};
			\addplot [legend entry={70\%}, mark=*, proseco3] table[x=n, y=b1.04, header=true] {./data/mse_vs_beta.dat};
			\addplot [legend entry={80\%}, mark=*, gray] table[x=n, y=b1.28, header=true] {./data/mse_vs_beta.dat};
			\addplot [legend entry={90\%}, mark=*, charcoal] table[x=n, y=b1.64, header=true] {./data/mse_vs_beta.dat};
			\addplot [legend entry={99\%}, mark=*, orange] table[x=n, y=b2.58, header=true] {./data/mse_vs_beta.dat};
			\end{axis}
			\end{tikzpicture}
		\fi
		\caption{Mean squared error of the reliable estimator \rsymb of the population optimal causal effect $\esymb(\drule^*)$ at various \zscore.}\label{fig:mse_vs_beta}
	\end{figure}
	
	\section*{Efficiency of the Branch-and-Bound Search}
	Next we assess efficiency of the branch-and-bound search. To this end, first we search for top-$1$ rule in all the standard classification datasets from the KEEL repository.\!\footnote{\url{https://sci2s.ugr.es/keel/datasets.php}}
	%~\cite{salvador:11:keel}. 
	The diversity of these datasets in terms of their sample size and number of actionable variables provides a reasonable picture on the efficiency of the proposed search algorithm in a real-world scenario.
	%\!\footnote{As both \texttt{census} and \texttt{adult} datasets are extracted from the same census data of the United States in 1994, we only consider one of them in our evaluation. Further, instead of the \texttt{titanic} dataset in the \texttt{KEEL} repository with only $3$ attributes, we consider the \texttt{titanic} training set with $10$ attributes and $1$ target from the \texttt{Kaggle} prediction challenge.} 
	For each dataset, we select the classification target as the target, and randomly select one of the attributes as the control variable. As outcome \targetval, we select one of the outcomes of the target \target. We discretise a real-valued actionable variable into maximum $8$ equi-frequent bins.
	
	\begin{table}[tb]
		%	\captionsetup{width=.8\textwidth}
		\centering
		\caption{Summary of the datasets used for the empirical evaluation along with the efficiency results. For each dataset, we report the chosen target variable (\target), the chosen control variables (\controls), the sample size (\samplesize), the number of actionable variables ($|\descriptionVars|$), the approximation factor (\approxfactor), the runtime in seconds, and the number of nodes expanded during search.}
		\label{tab:efficiency-results}
		\renewcommand{\arraystretch}{1.5}
		\resizebox{\columnwidth}{!}{%
			\begin{tabular}{@{}lllRRRRR@{}}
				\toprule
				%		& & & & & & \multicolumn{2}{c}{\text{Runtime (s)}} & & \multicolumn{2}{c}{\text{\#nodes}}\\
				%		\cmidrule{7-8} \cmidrule{10-11}
				Dataset & Target (\target) & Control (\controls) & \samplesize & |\descriptionVars| & \approxfactor & \text{time (s)} & \#\text{nodes}\\ 
				\midrule
				adult & class & sex & 48,842 & 13 & 0.8 & 1,717 & 258,575\\
				australian & class & a4 & 690 & 13 & 1.0 & 146 & 952,175\\
				automobile & output & engine-type & 205 & 24 & 1.0 & 1 & 15,167\\
				breast & class & age & 286 & 8 & 1.0 & 78 & 420\\
				car & acceptability & safety & 1,728 & 5 & 1.0 & 0.02 & 33\\
				chess & class & bkblk & 3,196 & 35 & 1.0 & 851 & 1,613,398\\
				connect-4 & class & a1 & 67,557 & 61 & 0.3 & 1,679 & 140,707\\
				crx & class & a1 & 690 & 14 & 1.0 & 14 & 101,621\\
				fars & injury-severity & case-state & 100,968 & 28 & 0.8 & 724 & 22,328\\
				flare & class & prev24hour & 1,066 & 10 & 1.0 & 0.014 & 32\\
				german & customer & statusAndSex & 1,000 & 19 & 1.0 & 8 & 43,007\\
				housevotes & class & el-salvador-aid & 435 & 15 & 1.0 & 0.007 & 57\\
				kddcup & class & atr-6 & 494,020 & 40 & 0.99 & 37 & 219\\
				kr-vs-k & game & white-king-col& 28,056 & 5 & 1.0 & 30 & 7,304\\
				lymphography & classes & changes-in-lym & 148 & 17 & 1.0 & 0.14 & 1,666\\
				mushroom & class & gill-size & 8,124 & 21 & 1.0 & 0.307 & 215\\
				nursery & class & social & 12,690 & 7 & 1.0 & 0.66 & 279\\
				post-operative & decision & l-core & 90 & 7 & 1.0 & 0.016 & 258\\
				splice & class & pos1 & 3,190 & 59 & 1.0 & 1.03 & 1,855\\
				tic-tac-toe & class & topleft & 958 & 8 & 1.0 & 0.11 & 488\\
				titanic & survived & sex & 891 & 9 & 1.0 & 4.5 & 26,700\\
				zoo & type & aquatic & 101 & 15 & 1.0 & 0.009 & 96\\
				\bottomrule
			\end{tabular}
		}
	\end{table}
	
	In the Tab.~\ref{tab:efficiency-results}, we provide the summary of the datasets along with the efficiency results.
	For each dataset, we report the target (\target), the set of control variables (\controls),\!\footnote{Although we use only one variable in \controls, it does not really work to our advantage here. Assuming that observations are roughly uniformly distributed across values of \controls, we only have $\samplesize/|\controlsspace|$ observations within each group of \controls. Thus group sizes gets smaller as we we add more variables in \controls. This in turn increases the bias term in the reliable estimator, thereby decreasing the number of rules with \emph{positive} reliable causal effect. That fact combined with the tightness of the optimistic estimator eventually speeds up the branch-and-bound search as we add more variables to \controls.} the sample size (\samplesize), the number of actionable variables ($|\descriptionVars|$), the approximation factor (\approxfactor) such that the branch-and-bound search finishes within an hour, the runtime in seconds, and the number of nodes expanded during the search. We observe that the branch-and-bound search with the tight optimistic estimator retrieves the \emph{optimal} top-$1$ result within seconds for most datasets, taking up to roughly an hour for a few datasets. 
	
	\begin{figure}[tb]
		\ifincludepdf
			\includegraphics{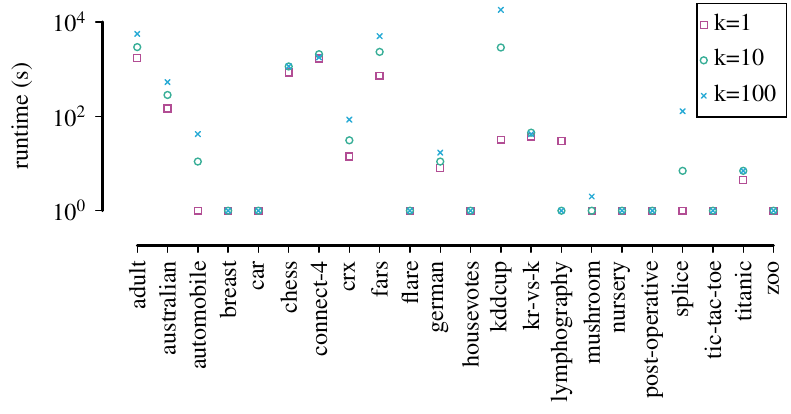}
		\else
			\begin{tikzpicture}
			\begin{axis}[
			clean mark,
			only marks,
			thin,
			mark size=1pt,
			cycle list name=gopalette, 
			width=0.95\columnwidth,
			height=3.5cm,
			ylabel=runtime (s), 
			legend style={legend columns=1,at={(0.95,1.1)},anchor=north},
			symbolic x coords={adult,australian,automobile,breast,car,chess,connect-4,crx,fars,flare,german,housevotes,kddcup,kr-vs-k,lymphography,mushroom,nursery,post-operative,splice,tic-tac-toe,titanic,zoo},
			xtick=data,
			ymin=1,
			ymax=10000,
			y label style={yshift=36pt},
			x label style={yshift=-15pt},
			x tick label style={rotate=90},
			every axis plot/.append style={very thick},
			ymode = log
			]
			\addplot [legend entry={k=1}, mark=square, every mark/.append style={thin, solid, fill=none}, proseco2] table[x=dataset, y=k1, header=true, col sep=comma] {./data/timing_comparison.dat};
			\addplot [legend entry={k=10}, mark=o, every mark/.append style={thin, solid, fill=none}, proseco3] table[x=dataset, y=k10, header=true, col sep=comma] {./data/timing_comparison.dat};
			\addplot [legend entry={k=100}, mark=x, every mark/.append style={thin, solid, fill=none}, proseco1] table[x=dataset, y=k100, header=true, col sep=comma] {./data/timing_comparison.dat};
			\end{axis}
			\end{tikzpicture}
		\fi
		\caption{Runtime, in seconds, of the top-$k$ branch-and-bound search algorithm for different values of $k$ for the real-world datasets in Tab.~\ref{tab:efficiency-results}.}\label{fig:scalability}
	\end{figure}
	
	In practice, it may be of interest to look for multiple rules for various reasons. Therefore, next we evaluate the scalability of the branch-and-bound algorithm with respect to $k$ in top-$k$ search. In Fig.~\ref{fig:scalability}, we show the runtime of the branch-and-bound algorithm for $k=1, 10, 100$ in all the standard classification datasets from the KEEL repository, using same approximation factors. For most datasets, we observe that the branch-and-bound search finishes within seconds at all values of $k$. For a few datasets, even though runtime of the algorithm increases with increasing value of $k$, it finishes within a couple of hours.
\end{document}